\def\Journaltemplate{0}
\let\LaTeXcline\cline \documentclass[default,Numbered]{sn-jnl}\let\cline\LaTeXcline
    \providecommand{\keywords}[1]{\textbf{\textit{Keywords---}} #1}
\newtheorem{theorem}{Theorem}
\newtheorem{proposition}{Proposition}
\definecolor{orcidlogocol}{HTML}{A6CE39}
\tikzset{
	orcidlogo/.pic={
		\fill[orcidlogocol] svg{M256,128c0,70.7-57.3,128-128,128C57.3,256,0,198.7,0,128C0,57.3,57.3,0,128,0C198.7,0,256,57.3,256,128z};
		\fill[white] svg{M86.3,186.2H70.9V79.1h15.4v48.4V186.2z}
		svg{M108.9,79.1h41.6c39.6,0,57,28.3,57,53.6c0,27.5-21.5,53.6-56.8,53.6h-41.8V79.1z M124.3,172.4h24.5c34.9,0,42.9-26.5,42.9-39.7c0-21.5-13.7-39.7-43.7-39.7h-23.7V172.4z}
		svg{M88.7,56.8c0,5.5-4.5,10.1-10.1,10.1c-5.6,0-10.1-4.6-10.1-10.1c0-5.6,4.5-10.1,10.1-10.1C84.2,46.7,88.7,51.3,88.7,56.8z};
	}
}
\newcommand\orcidicon[1]{\href{https://orcid.org/#1}{\mbox{\scalerel*{
\begin{tikzpicture}[yscale=-1,transform shape]
\pic{orcidlogo};
\end{tikzpicture}}{|}}}}
\newtheorem{definition}{Definition}
\newcommand{\eg}{\textit{e.g.}}
\newcommand{\ie}{\textit{i.e.}}
\newcommand{\st}{\textrm{s.t.}}
\newcommand{\R}{\mathbb{R}}
\DeclareMathOperator*{\argmax}{arg\,max}
\DeclareMathOperator*{\argmin}{arg\,min}
\begin{document}
\ifnum \Journaltemplate=1
\title[Article Title]{A constrained optimization approach to improve robustness of neural networks}
\else
\title{A constrained optimization approach to improve robustness of neural networks}
\fi

\ifnum \Journaltemplate=1
\author[1]{\fnm{Shudian} \sur{Zhao} }\email{shudian@kth.se}
\author*[1]{\fnm{Jan} \sur{Kronqvist} }\email{jankr@kth.se}

\affil[1]{\orgdiv{Department of Mathematics}, \orgname{KTH - Royal Institute of Technology}, \orgaddress{\street{Lindtstedtsvägen 25}, \city{Stockholm}, \postcode{100 44}, \country{Sweden}}}
\else
\author{Shudian Zhao\footnote{KTH - Royal Institute of Technology, Stockholm,
Sweden, Department of Mathematics, Lindtstedtsvägen 25, SE-100 44 Stockholm; \href{mailto:shudian@kth.se}{shudian@kth.se}}~\orcidicon{0000-0001-6352-0968} \and Jan Kronqvist \footnote{KTH - Royal Institute of Technology, Stockholm,
Sweden, Department of Mathematics, Lindtstedtsvägen 25, SE-100 44 Stockholm; \href{mailto:jankr@kth.se}{jankr@kth.se}} ~\footnote{Corresponding Author}~\orcidicon{0000-0003-0299-5745}}
\date{\today}
\fi

\ifnum \Journaltemplate=1
\abstract{
  In this paper, we present a novel nonlinear programming-based approach to fine-tune pre-trained neural networks to improve robustness against adversarial attacks while maintaining high accuracy on clean data. Our method introduces adversary-correction constraints to ensure the correct classification of adversarial data and minimizes changes to the model parameters. We propose an efficient cutting-plane-based algorithm to iteratively solve the large-scale nonconvex optimization problem by approximating the feasible region through polyhedral cuts and balancing between robustness and accuracy. Computational experiments on standard datasets such as MNIST and CIFAR10 demonstrate that the proposed approach significantly improves robustness, even with a very small set of adversarial data, while maintaining minimal impact on accuracy. }

\keywords{Neural Network, Robust Training, Nonlinear Optimization}
\maketitle
\else
\maketitle
\begin{abstract}
    In this paper, we present a novel nonlinear programming-based approach to fine-tune pre-trained neural networks to improve robustness against adversarial attacks while maintaining high accuracy on clean data. Our method introduces adversary-correction constraints to ensure the correct classification of adversarial data and minimizes changes to the model parameters. We propose an efficient cutting-plane-based algorithm to iteratively solve the large-scale nonconvex optimization problem by approximating the feasible region through polyhedral cuts and balancing between robustness and accuracy. Computational experiments on standard datasets such as MNIST and CIFAR10 demonstrate that the proposed approach significantly improves robustness, even with a very small set of adversarial data, while maintaining minimal impact on accuracy. 
    
    \keywords{Neural Network, Robust Training, Nonlinear Optimization}
\end{abstract}
\fi

\section{Introduction}
There has been remarkable progress in deep learning in the last two decades \cite{lecun2015deep}, and state-of-the-art deep learning models are even on-pair with human performance for specific image classification tasks. Still, they can be very sensitive towards adversarial attacks, \textit{i.e.}, tiny carefully selected perturbations to the inputs \cite{machado2021adversarial,goodfellow2014explaining,szegedy2014intriguing,kurakin2016adversarial}. This sensitivity is well known and is a major safety concern when incorporating deep learning models in autonomous systems \cite{huang2018safety,papernot2017practical}. For example, \cite{eykholt2018robust} showed that a classifier of traffic signs could be fooled by adding small perturbations resembling graffiti to the signs. Consequently, verifying whether models can be fooled by adversarial perturbations and developing techniques to train robust models have become active research areas in recent years \cite{ruan2018reachability, katz2017reluplex, botoeva2020efficient, goodfellow2014explaining, madry2017towards}.
Training a deep learning model with good performance on training data that is also robust against adversarial perturbations is an intriguing topic from a mathematical programming (MP) perspective. Training a deep learning model, such as a deep neural network (DNN), on a given training data set alone is challenging. DNNs used for practical applications can easily have more than a million trainable parameters, \textit{i.e.}, variables. On top of that, the loss function is typically highly nonconvex. The sheer size of the problems typically renders higher-order methods ill-suited, and the nonconvexity, in combination with the size, renders deterministic global optimization algorithms computationally intractable. When considering the goal of guaranteeing the robustness of a model against adversarial perturbations during the training, we are left with a truly challenging optimization problem.

Here, we focus on improving the robustness of DNNs in an image classification setting. We consider the task of fine-tuning a pre-trained model to improve robustness by utilizing a small given set of adversarial examples. Some of the deterministic approaches for finding adversarial examples build upon solving a mixed-integer linear programming (MILP) problem to find each example or show that none exists \cite{botoeva2020efficient,fischetti2018deep}. Due to the nonconvexity or the large size of the resulting MILP, adversarial examples can be difficult or time-consuming to find. Especially when the DNN is somewhat more robust, it becomes computationally more challenging to find adversarial examples. For this reason, the goal is to improve robustness using only a small set of adversarial examples.

We propose a nonlinear programming (NLP) formulation for fine-tuning a pre-trained DNN and adjusting its parameters to improve classification accuracy on adversarial data while keeping a good accuracy on the initial training data. The formulation builds on adding constraints that the adversarial examples must be classified correctly, that the loss function on training data should be kept at a certain level, and that the change to the DNN should be minimal. We prove that an optimal solution to this finite fine-tuning problem gives a robust DNN. However, the resulting problem is huge, from a classical MP perspective, with a large number of highly nonlinear and nonconvex constraints. Directly optimizing this fine-tuning NLP problem is not computationally tractable for DNN models and data sets of relevant size. We present an algorithm for approximately solving the fine-tuning problem. The idea is to iteratively solve a sequence of subproblems that approximate the feasible set by a polyhedral approximation. This is used together with a line search and a multiobjective selection strategy to balance between accuracy on training- and adversarial data. Computational experiments show that our approach can significantly improve robustness -- with minimal change in accuracy on clean data -- using a small set of adversarial examples.


Verifying that a DNN is robust against adversarial perturbations is, on its own,  $\mathcal{NP}$-complete \cite{katz2017reluplex}. Instead, the robustness and sensitivity of the DNN are estimated by considering the success rate of an adversarial attacker. Here, the adversarial attacker refers to an algorithm designed to make small changes to the input data to make the DNN classify the data incorrectly.

The paper is structured as follows. In Section~1.1 briefly summarizes some of the earlier work on improving the robustness of DNNs. For clarity, we briefly discuss our notation and some terminology in Section~\ref{sec:notation}. In Section~\ref{sec:formulation}, we present the adversary-correction problem, and our fine-tuning algorithm is presented in Section~\ref{sec:alg}. Some implementation details are discussed in Section~\ref{sec:impl}, and the computational experiments are presented in Section~\ref{sec:experiments}.

\subsection{Related work}

This section summarizes the recent contributions and outcomes regarding the robustness and safety of neural networks. There has been growing attention on the topic across different fields of research.

Processing on the dataset is one way to cope with adversarial attacks on the neural networks;
Pre-processing on the test dataset for de-noising can help detect and remove adversarial attacks before the test evaluation~\cite{liao2018defense}; \cite{liu2021training} proposed the adversarial training propagation algorithm that injects noise into the hidden layers.

Pruning-based methods have shown remarkable performance in reducing over-fitting \cite{lecun1989optimal,hassibi1993optimal,yu2022combinatorial}, which also leads to improvement in robustness; \cite{sehwag2020hydra} introduced a pruning-based method against adversarial attacks for large neural network size (often millions of parameters); \cite{zhao2023model} introduced a  MILP-based feature selection method that achieves improved robustness while maintaining accuracy; \cite{tramer2017ensemble} proposed an ensemble adversarial training approach that utilizes a decreasing gradient masking during the training process.

A common approach to train more robust models is to add adversarial data into the training set. For example, \cite{kurakin2016adversarial, szegedy2014intriguing} have shown that feeding adversarial data to the training set during the training process can result in a form of regularization and improve adversarial robustness. These approaches often rely on large sets of adversarial data and are therefore not relevant in our setting, where we assume we only have access to a small set of adversarial data.  We also tested the simplest approach of directly adding adversarial data to the training set in the numerical experiment section, and from the results, it was clear that larger adversarial data sets are needed for this approach to be effective.  

Many research studies have focused on adversarial training (AT), which aims to minimize the loss function of adversarial attacks with the maximum distortions. Various gradient-based methods have been explored to generate an attacker with maximum distortion under this framework, \eg, Fast Gradient Sign Method \cite{goodfellow2014explaining},  Projected gradient descent method (PGD) \cite{madry2017towards}, and  Frank-Wolfe (FW) Method~\cite{tsiligkaridis2022understanding}. 
PGD is regarded as the main algorithm for adversarial training with the power of finding maximum distortion, however, at the cost of computational expense. 
Compared to PGD's high computational cost, FW uses fewer iterations to find near-maximal distortion.

Instead of generating adversarial distortion for each data point in the training dataset, \cite{qin2019adversarial} linearized the loss of adversarial attackers by including a local linearity regularization and \cite{raghunathan2018certified} used a semidefinite programming regularization as an adversarial certification. 

The verification of a neural network is closely connected to the robust training and provides inspiring insights for improving robustness against data poisoning (\eg, \cite{wu2021tightening, sosnin2024certified}); \cite{zugner2019certifiable} and  \cite{shi2021fast} have introduced certified training method using interval bound propagation with regularization to tighten the certified bounds.

Due to its modeling power, mixed-integer programming has gained more popularity in neural network training. The nonlinearity of the forward function of neural networks can be encoded by mixed-integer programming \cite{fischetti2018deep}. This technique has led to MIP-based training methods, especially for neural networks with simple architecture, such as the binary neural network \cite{Aspman_Korpas_Marecek_2024} and integer-valued neural networks \cite{thorbjarnarson2023optimal}. Furthermore, \cite{toro2019training} emphasized robustness and simplicity by formulating the objective function as min-weight and max-margin, respectively, and conducted a hybrid approach to balance these two principles.

\subsection{Notation}\label{sec:notation}
We briefly introduce some notations and terminology that are used throughout the paper. $w$ denote all the trainable parameters, \ie, weights and biases $\{w^l,b^l\}_{l=1,
\dots, L-1}$, of the neural network. $\mathcal{X}_{\text{train}}:=\{x_n,y_n\}_{n=1,\dots,N}$ denotes the clean training dataset with input $x_n$ and labels $y_n$. 

Our framework is not limited to a specific loss function, and we use 
$$\ell (w):= \ell (w;\{x_n,y_n\}_{n=1,\dots,N}),$$ 
to denote the loss function over the training dataset $\mathcal{X}_{\text{train}}$ for the neural network with parameters $w$. The forward function, \textit{i.e.}, the inference of a classifier with parameters $w$ is denoted by $C(\cdot;w)$ and defined as $$C(\cdot;w):= \argmax_{i\in I} f_i(\cdot;w),$$ where $I$ is the set of labels, and $f_i(\cdot;w)$ is the classification confidence of label $i$ by the neural network. $f_i(\cdot;w)$ is, thus, the function from the input to the $i$-th output of the neural network. We define an epsilon neighborhood around the input $x$ as 
$$ B_{\varepsilon}(x):=\{\tilde{x}\in\mathbb{R}^m\mid  \ \|x-\tilde{x}\|_{\infty} \leq \varepsilon\},$$
with $\varepsilon > 0$. We say that any point in $B_{\varepsilon}(x)$ is an $\varepsilon$ perturbation of $x$. As adversarial data plays a central role in the paper, we include a formal definition of what we mean by adversarial data.

\begin{definition}
 We say that the input $\tilde{x}$ is an adversarial data point if it is misclassified and $ \exists x \in \mathcal{X}_\text{train}: \ \tilde{x} \in B_{\varepsilon}(x)$ and $x$ is correctly classified by the neural network.
\end{definition}

We use the notation $\mathcal{X}_\text{adv}:=\{\tilde{x}_n, y_n\}_{n=1,\dots, M}$ to denote a set of adversarial data, \textit{i.e.}, a set of data points $\tilde{x}_n$ with $y_n$ as its true unperturbed label that is wrongfully classified due to a small perturbation.

\begin{definition}(Resilience set)
    We define the resilience set for data point $x$ and the neural network with parameters $w$ as a subset of $\mathcal{B}_{\varepsilon}(x)$ that includes all the perturbations of $x$ within $B_{\varepsilon}(x)$ resulting in the same classification as $x$. We denote 
    $B_{\varepsilon}(x, w) \subset B_{\varepsilon}(x)$  the resilience set, and is given by $B_{\varepsilon}(x, w):= \{\tilde{x} \in B_\varepsilon \mid C(\tilde{x},w) = C(x,w) \}$.
\end{definition}

The resilience set helps us analyze robustness. Suppose that a data point $x$ is correctly classified and  $B_{\varepsilon}(x, w) = B_{\varepsilon}(x)$, then we know that this data point cannot be misclassified by a $\varepsilon$ perturbation, and we can consider the neural network to be robust for this specific input.

\begin{definition}
    Let $\mu\left(B_{\varepsilon}(x, w)\right)$ denote the Lebesgue measure of  $B_{\varepsilon}(x, w)$. We define the robustness of data point $x$ for the given neural network as 
$$R(x):=\frac{\mu\left(B_{\varepsilon}(x,w)\right)}{\mu\left(B_{\varepsilon}(x)\right)}.$$
\end{definition}
Here, $R(x)=1$ implies that the data point $x$ cannot be misclassified by a $\varepsilon$ perturbation. But, more importantly, the value of $R(x)$ contains more information about robustness. $R(x)$ close to zero implies that the classification of the data point can easily be changed by a small perturbation. Thus, smaller $R(x)$ will increase the likelihood of an adversarial attacker being successful in changing the classification of $x$. With $R(x)$ close to zero, even a small random perturbation will likely change the classification of $x$.

\section{Robust training of Neural Networks}\label{sec:formulation}
The sensitivity that enables small, sometimes even invisible, perturbations to the input to change the classification is undesirable. The goal of robust training is to make the neural network less sensitive to such perturbations. First, we introduce a formal definition of what we mean by a robust neural network.

\begin{definition}[$\varepsilon$-robust]
Given a neural network  $\mathcal{X}_{\text{corr}} \subset \mathcal{X}_{\text{train}}$ is the subset of training data that are all classified correctly, we say that the network is $\varepsilon$-robust if $\nexists \ \tilde{x} \in B_{\varepsilon}(x)$ for any $x \in \mathcal{X}_{\text{corr}}$ such that $\tilde{x}$ is classified differently than $x$.
\end{definition}

According to Definition~4, the neural network is $\varepsilon$-robust if no $\varepsilon$ perturbation to the input can make a correctly classified point from the training set misclassified. The strongest form of $\varepsilon$-robustness would be that all data points in the training set $\mathcal{X}_{train}$ are classified correctly, and none of them can be misclassified by an $\varepsilon$ perturbation. However, achieving such a degree of robustness is unrealistic, as correctly classifying all training data points is typically not possible for real-world applications. 

We focus on fine-tuning a neural network to increase robustness; The ultimate goal of which would be to change the parameters of the neural network as little as possible while ensuring that the network is $\varepsilon$-robust. Given a trained neural network with parameters $\hat{w}$ and the set of correctly classified training points $\mathcal{X}_{\text{corr}}$, we can form the robust fine-tuning problem
\begin{equation}\label{eq:robust_training}
    \begin{aligned}
        \min_w~& \|w-\hat{w}\|_2^2 \\
        \st~& C(\tilde{x};w)= C(x;\hat{w}), \forall \tilde{x} \in B_{\varepsilon}(x), \forall x \in \mathcal{X}_{\text{corr}}.
    \end{aligned}
\end{equation}

The constraint in \eqref{eq:robust_training} states that the classification must be corrected for all the previously correctly classified training points in $\mathcal{X}_{\text{corr}}$, as well as any $\varepsilon$ perturbation of those points. It can be observed that the loss function $\ell (w)$ is not included in \eqref{eq:robust_training}, as its constraints ensure that the accuracy on $\mathcal{X}_{\text{train}}$ remains unchanged. If we could solve problem \eqref{eq:robust_training}, we would obtain a fine-tuned neural network that is $\varepsilon$-robust. Unfortunately, solving \eqref{eq:robust_training} is computationally intractable for neural networks and training sets of relevant size. Problem \eqref{eq:robust_training} may look innocent, but keep in mind that it includes an infinite number of constraints for each corresponding point in $\mathcal{X}_{\text{corr}}$. Furthermore, each of the constraints $C(\tilde{x};w)= C(x;\hat{w})$ are, in general, nonconvex, and $w$ is high dimensional (more than $10^5$ variables for the larger neural network we test with).

For a given neural network architecture, and depending on the data set $\mathcal{X}_\text{train}$, it might not be possible to obtain an $\varepsilon$-robust neural network by a fine-tuning process. Instead, we focus on improving robustness. For clarity, we include a formal definition of what we mean by a neural network being more robust.
\begin{definition}[Strictly more robust over $\mathcal{X}_{train}$]
Given two neural networks, a and b, with parameters $w_a$ and $w_b$ that both classify the same subset of training data $\mathcal{X}_\text{corr}\subset\mathcal{X}_\text{train}$ correctly. Then we say that neural network a is strictly more robust over $\mathcal{X}_{train}$ than neural network b if
$$R_a(x)=\frac{\mu\left(B_{\varepsilon}(x,w_a)\right)} {\mu\left(B_{\varepsilon}(x)\right)}\ \geq R_b(x)=\frac{\mu\left(B_{\varepsilon}(x,w_b)\right)} {\mu\left(B_{\varepsilon}(x)\right)} \quad \forall x \in \mathcal{X}_\text{corr} ,$$
and $\exists \ \bar{x}\in \mathcal{X}_\text{corr}:R_a(\bar{x}) > R_b(\bar{x})$.
\end{definition}

As mentioned, evaluating if a data point in $\mathcal{X}_\text{train}$ can be misclassified by a small perturbation is computationally challenging. Furthermore, evaluating $\mu\left(B_{\varepsilon}(x,w_b)\right)$, and even constructing set $B_{\varepsilon}(x,w_b)$, is computationally intractable. Therefore, we use the success rate of an adversarial attacker to evaluate if a neural network is more robust in the computational results.

To obtain a more tractable fine-tuning problem than \eqref{eq:robust_training}, we choose to focus on adjusting the neural network based on a given set of adversarial data $\mathcal{X}_{\text{adv}}$ to improve robustness. The adversarial data can, \textit{e.g.}, represent critical misclassifications made by the neural network. Finding such adversarial data can be computationally expensive, in fact $\mathcal{NP}$-complete \cite{katz2017reluplex}. Improving robustness with a small set of adversarial data is, therefore, desirable and the focus of our work. In practice, it is often easy to find adversarial data points for sensitive neural networks but increasingly difficult for more robust neural networks. Here, we have not focused on how to find adversarial examples; we simply assume a set of adversarial data $\mathcal{X}_{\text{adv}}$ is given. For more details on techniques for finding adversarial examples, we refer to \cite{katz2017reluplex,botoeva2020efficient,tsay2021partition}.

\subsection{The adversary-correction problem}

As mentioned, directly optimizing \eqref{eq:robust_training} is not computationally tractable as such. Therefore, we instead focus on fine-tuning to correct the misclassified adversarial data. We develop a process based on an adversary-correction problem reducing the constraints in \eqref{eq:robust_training} to a finite set of constraints defined by the adversarial data set $\mathcal{X}_\text{adv}:=\{\tilde{x}_n, y_n\}_{n=1,\dots, M}$, where $\tilde{x}_n$ is a perturbed data point from $\mathcal{X}_\text{train}$  and $y_n$ is its true unperturbed label. 

To enforce the correct classification of adversarial data, we could use the constraints 
\begin{equation}
    C(\tilde{x}_n;w)= y_n, \quad \forall (\tilde{x}_n, y_n) \in \mathcal{X}_\text{adv}.
\end{equation}
However, the forward pass function $C(\cdot;w)$ involves the $\argmax$ operator, which further complicates the constraint. Instead, we promote the correct classification by adding inequality constraints that the classification confidence in the correct label must be greater than or equal to confidence in all other labels. 

For a pair of classification labels $i \in I$ and $j \in I$, $i$ is considered to be at least as likely classification as $j$ for input $\tilde{x}$ if
\begin{equation}
    f_i(\tilde{x};w) \geq f_j(\tilde{x};w).
\end{equation}
Recall that  $f_i$ is a function representing the $i$-th output as a function of the neural network's input, and $f_i(\tilde{x};w)$ represents the classification confidence of the $i$-th label. Given an adversarial data point $\tilde{x}$ and its correct classification $y$, we can ensure that $y$ is the classification by the neural network by adding the constraints
\begin{equation}
\label{eq:corr_class}
        f_{y}(\tilde{x};w) \geq f_j(\tilde{x};w) + \delta, \quad \forall j \in I \setminus y,
\end{equation}
where $\delta > 0 $ (\textit{e.g.,} $ \delta=10^{-5}$) is a threshold constant to guarantee $f_y(\cdot)$ is strictly larger than the classification confidence of other labels.  
Note that the only variables here are $w$, as the adversarial data points and their labels are given in set $\mathcal{X}_\text{adv}$.

With multiple adversarial data points in $\mathcal{X}_\text{adv}$, we can form the constraints in \eqref{eq:corr_class} for each data point. To simplify the notation, we introduce the $\mathcal{G}( \mathcal{X}_\text{adv})$ that defines the feasible set of constraints \eqref{eq:corr_class} generated for each data point. The feasible set is, thus, defined as
\begin{equation} 
\label{eq:adv_cons_nonlin}
    \mathcal{G}( \mathcal{X}_\text{adv}) := \{ w \in \mathbb{R}^J \mid f_{y}(\tilde{x};w) \geq f_{i}(\tilde{x};w) +\delta, \forall i\neq y, \forall (\tilde{x},y) \in \mathcal{X}_\text{adv}\},
\end{equation}
where $J$ is the total number of parameters of the neural network.

Projecting the neural network's parameters $\hat{w}$ onto $\mathcal{G}(\mathcal{X}_\text{adv})$ ensures that each adversarial data point can be correctly classified. However, as we are now only focusing on a potentially small set of adversarial data points, this could be detrimental to the overall classification accuracy. Therefore, we should also consider the loss function for the training data. We now define the adversarial correction problem as the projection of the neural network's parameters $\hat{w}$  onto the intersection of $\mathcal{G}( \mathcal{X}_\text{adv})$ and solutions with a no-worse loss function over the training data $\mathcal{X}_\text{train}$. The adversarial correction problem can then be formulated as 
\begin{subequations}\label{eq:robust-project_org}
\begin{align}
     \min~&\|w-\hat{w}\|_2^2\\
        \st~&
        w \in \mathcal{G}( \mathcal{X}_\text{adv}),\label{eq:adv_org} \\
        & \ell (w;\mathcal{X}_\text{train}) \leq  \ell(\hat{w},\mathcal{X}_\text{train}) \label{eq:loss_org}.
\end{align}
\end{subequations}
In practice, it can be beneficial to relax constraint \eqref{eq:loss_org} to $\ell (w;\mathcal{X}_\text{train}) \leq  \ell(\hat{w},\mathcal{X}_\text{train}) + \xi$ to allow a trade-off between training accuracy and classification of the adversarial data. Especially if the adversarial data set is larger, there might not exist a solution to \eqref{eq:robust-project_org} without relaxing the constraint on the loss function. We will come back to this trade-off later in the next section.

Next, we provide a theoretical justification for the adversarial correction problem \eqref{eq:robust-project_org} and prove that an optimal solution can ensure $\varepsilon$-robustness. In the analysis, we build on the assumption that the output functions $f_i$ are Lipschitz continuous and where $L < \infty$ is the largest Lipschitz constant of the output functions. As all the parameters of the neural network are bounded (in practice bounded to reasonably small values, \eg, $\pm 10^6$), the assumption that $f_i$ are Lipschitz continuous is reasonable and not restrictive in practice.
\begin{theorem}
There exists a finite set of adversarial data $\mathcal{X}_\text{adv}$, such that an optimal solution (if one exists) to the adversarial correction problem \eqref{eq:robust-project_org} is also a feasible solution to the robust fine-tuning problem \eqref{eq:robust_training}.
\end{theorem}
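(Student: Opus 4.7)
The plan is to construct $\mathcal{X}_\text{adv}$ as a finite $\eta$-net inside each neighborhood $B_\varepsilon(x)$ for $x\in\mathcal{X}_\text{corr}$, and then lift the pointwise adversary-correction inequalities from the net to the entire neighborhood using the assumed Lipschitz continuity of the output functions $f_i$. Intuitively, the margin $\delta$ built into the constraint \eqref{eq:corr_class} buys a uniform slack that absorbs the interpolation error between a net point and an arbitrary nearby perturbation.

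Concretely, the steps I would carry out in order are the following. First, fix a resolution $\eta > 0$ small enough that $2L\eta < \delta$, for instance $\eta := \delta/(3L)$ (with $L$ taken with respect to $\|\cdot\|_\infty$; a factor of $\sqrt{m}$ must be inserted if $L$ is stated with respect to $\|\cdot\|_2$). Second, for each $x\in\mathcal{X}_\text{corr}$ with correct label $y_x:=C(x;\hat{w})$, observe that $B_\varepsilon(x)\subset\mathbb{R}^m$ is a compact hypercube of side $2\varepsilon$ and thus admits a finite $\eta$-net $N_x$ of cardinality at most $\lceil 2\varepsilon/\eta + 1\rceil^m$, where every $\tilde{x}'\in B_\varepsilon(x)$ is within $\|\cdot\|_\infty$-distance $\eta$ of some $\tilde{x}\in N_x$. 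Third, define
$$\mathcal{X}_\text{adv} := \bigcup_{x \in \mathcal{X}_\text{corr}} \{(\tilde{x},y_x) : \tilde{x}\in N_x\},$$
which is a finite, labeled set. Fourth, take any optimal $w^\ast$ of \eqref{eq:robust-project_org}; feasibility implies $w^\ast\in\mathcal{G}(\mathcal{X}_\text{adv})$, so at every net point $\tilde{x}$ we have $f_{y_x}(\tilde{x};w^\ast)\geq f_j(\tilde{x};w^\ast)+\delta$ for all $j\neq y_x$.

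The final step is the Lipschitz extension. For an arbitrary $\tilde{x}'\in B_\varepsilon(x)$, pick $\tilde{x}\in N_x$ with $\|\tilde{x}'-\tilde{x}\|_\infty\leq \eta$; then for every $j\neq y_x$,
$$f_{y_x}(\tilde{x}';w^\ast) \;\geq\; f_{y_x}(\tilde{x};w^\ast) - L\eta \;\geq\; f_j(\tilde{x};w^\ast) + \delta - L\eta \;\geq\; f_j(\tilde{x}';w^\ast) + (\delta - 2L\eta) \;>\; f_j(\tilde{x}';w^\ast).$$
Hence $C(\tilde{x}';w^\ast) = y_x = C(x;\hat{w})$ for every $\tilde{x}'\in B_\varepsilon(x)$ and every $x\in\mathcal{X}_\text{corr}$, which is precisely the constraint of \eqref{eq:robust_training}.

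The main obstacle I anticipate is a terminological one: the definition of $\mathcal{X}_\text{adv}$ introduced earlier restricts its elements to points that are actually misclassified by the current network, whereas a generic $\eta$-net will contain both misclassified and correctly-classified points. This needs to be addressed explicitly; the cleanest way is to observe that the feasible set $\mathcal{G}(\mathcal{X}_\text{adv})$ in \eqref{eq:adv_cons_nonlin} is defined for \emph{any} labeled pairs $(\tilde{x},y)$ and does not depend on misclassification, so the argument goes through once we allow $\mathcal{X}_\text{adv}$ to include redundant, already-correct net points (or alternatively, we can restrict $N_x$ to the problematic sub-region and argue separately that the correctly-classified net points carry a safety margin under $\hat{w}$). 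A secondary, purely quantitative caveat is that the resulting finite set has cardinality exponential in the input dimension $m$ and in $L/\delta$; this is perfectly acceptable for an existence theorem but underscores why the cutting-plane algorithm of Section~\ref{sec:alg} is needed to make the formulation tractable in practice.
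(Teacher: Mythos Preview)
Your proposal is correct and follows essentially the same argument as the paper: both exploit the Lipschitz continuity of the output maps together with the margin $\delta$ to show that each constrained point carries a guaranteed-correct $\ell_\infty$-ball of radius $\Theta(\delta/L)$, and then invoke compactness of $B_\varepsilon(x)$ to conclude that finitely many such balls (equivalently, a finite $\eta$-net) suffice. Your write-up is in fact more explicit than the paper's---you fix $\eta=\delta/(3L)$, carry out the inequality chain in full, and flag the terminological mismatch about non-misclassified net points and the exponential cardinality---whereas the paper's proof is a three-sentence sketch that simply states the covering radius is at least $\delta/L$ and appeals to compactness.
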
\label{thm:adv_corr_cover}
\begin{proof}
For a neural network whose weights are given by an optimal solution to \eqref{eq:robust-project_org}, there exists a small neighborhood, given by a ball with radius $r \geq \frac{\delta}{L} >0$, around each adversarial data point $\tilde{x} \in \mathcal{X}_\text{adv}$ in which the classification remain correct. This follows from the fact that the input-output mapping of the neural network is Lipschitz continuous, and the output corresponding to the correct classification is $\delta$ greater than the other outputs. As $r >0$, we can make sure that a finite set of correct classification neighborhoods cover the sets $B_{\varepsilon}(x)\ \forall x \in \mathcal{X}_\text{corr}$, thus ensuring $\varepsilon$-robustness.
\end{proof}
Theorem~1 shows that the infinite number of constraints in robust fine-tuning problem \eqref{eq:robust_training} is not necessary to obtain $\varepsilon$-robustness. This serves as an important theoretical motivation for focusing on solving the finite adversarial correction problem \eqref{eq:robust-project_org} and shows that you do not have to use an infinite number of constraints nor integrate an adversarial attacker into the training problem (resulting in a bilevel problem) to ensure robustness. However, the number of adversarial data points needed to ensure $\varepsilon$-robustness can still be huge, and computationally intractable. However, to improve robustness, we clearly need fewer adversarial data points. To support this claim, consider the proof of Theorem~1. If the neural network is far from robust, \ie, $R(x)$ is close to zero for many points $x$ in $\mathcal{X}_\text{corr}$, then we only need to cover a small part of each set $B_{\varepsilon}(x)$ with adversarial data and the resulting correctly classified neighborhoods to improve robustness. From the theory, we may still need many adversarial data points to improve robustness, but it suggests that the number of points needed can be manageable. Our main hypothesis behind this work has been: \say{\textit{Robustness of a neural network can be improved by using relatively few adversarial examples in the adversarial correction problem}}. The hypothesis is supported by the numerical results, which show that even one adversarial data point per classification label can improve robustness.

Even if the adversarial correction problem~\eqref{eq:robust-project_org} is far simpler than the robust fine-tuning problem~\eqref{eq:robust_training}, it is still far from easy to solve. Both the constraints \eqref{eq:adv_org} and \eqref{eq:loss_org} are nonconvex, and, for relevant applications, we are typically dealing with more than $10^4$ variables. Therefore, finding a globally optimal solution to the adversarial correction problem~\eqref{eq:robust-project_org} still seems computationally intractable. In fact, optimality is not the main target as the objective function in \eqref{eq:robust-project_org} is not directly related to the neural network's performance. Rather, it is intended to help the algorithm by making the search more targeted. Feasibility is instead our main interest, as the constraints in \eqref{eq:robust-project_org} enforce the performance criteria that we are after. In the next section, we present an algorithm for approximately solving \eqref{eq:robust-project_org} to improve robustness. 

\section{The cutting-planed based fine-turning algorithm}\label{sec:alg}

The difficulty of solving \eqref{eq:robust-project_org} boils down to the onlinearity and nonconvexity of the constraints. Due to the problem size, directly applying second-order methods, such as interior point methods or sequential quadratic programming, does not seem computationally tractable. As a motivation for this statement, the larger neural network we consider has approximately 180,000 parameters, and the Hessian of the Lagrangian would be dense due to the neural network structure. A first-order method, therefore, seems better suited for the task and is also the typical choice for training neural networks. 

We will use a so-called cutting plane algorithm, where we approximate each nonlinear constraint with linear constraints. To linearize the constraints, we apply what is often referred to as gradient cuts, and we briefly describe their derivation. Recall, a first-order Taylor expansion to approximate a function $g:\mathbb{R}^n \rightarrow \mathbb{R}$ at the point $x_0$ is given by
\begin{equation*}
        \bar{g}(x):= g(x_0) + (x-x_0)^  \top \nabla_x g(x_0),
\end{equation*}
where $\nabla_x g$ denotes a (sub)gradient of $g$ with respect to $x$. By setting $\bar{g}(x)\leq 0$, we get a linearized constraint, or a cut, of the constraint $g(x)\leq 0$.

We now apply this linearization technique to the constraints $f_{y}(\tilde{x};w) \geq f_{i}(\tilde{x};w)$ that enforce the correct classification of adversarial data point $\tilde{x}$. The resulting cut from one of these constraints is
$$f_y(\tilde{x};\hat{w}) + (w-\hat{w})^\top \nabla_{w} f_y(\tilde{x};\hat{w}) - f_j(\tilde{x};\hat{w}) - (w-\hat{w})^\top \nabla_{w} f_j(\tilde{x};\hat{w})\geq \delta,$$
where $\nabla_{w} f_i(\tilde{x};\hat{w})$ is a gradient of the $i$-th output of the neural network with respect to the neural network's parameters $w$. Note that, depending on the type of activation functions used, the function $f_i$ may not be smooth everywhere, \textit{i.e.}, not necessarily continuously differentiable. At the nonsmooth points, we instead use a subgradient, which is also done in stochastic gradient descent for training neural networks.

Applying this linearization technique to all constraints in \eqref{eq:adv_cons_nonlin} gives us a linear approximation of $\mathcal{G}(\mathcal{X}_\text{adv})$, which is given by
\begin{equation}
\label{eq_lin-set1}
\begin{aligned}
     \bar{\mathcal{G}}( \mathcal{X}_\text{adv}):= & 
        \{ w \in \mathbb{R}^J\mid f_y(\tilde{x};\hat{w})\\
        & + (w-\hat{w})^\top \nabla_{w} f_y(\tilde{x};\hat{w}) - f_i(\tilde{x};\hat{w}) - (w-\hat{w})^\top \nabla_{w} f_i(\tilde{x};\hat{w})\geq \delta, \\
        &~
         \forall i\in I \setminus y, \forall (\tilde{x},y) \in \mathcal{X}_\text{adv} \}.
\end{aligned}
\end{equation}
Similarly, we linearize the loss function and add the constraint that the linearized loss function should not increase, \ie,
\begin{equation}
    \label{eq:lin-loss}
     (w-\hat{w})^\top\nabla_w \ell(\hat{w})\leq 0.
\end{equation}
If the loss function is smooth, then \eqref{eq:lin-loss} restricts us from moving in an ascent direction from the current solution $\hat{w}$.

Using the linear approximations \eqref{eq_lin-set1} and \eqref{eq:lin-loss} of the constraints in problem \eqref{eq:robust-project_org}, we form the approximated adversarial-correction problem as the convex quadratic programming (QP) problem
     \begin{equation}\label{eq:adv_qp_loss}
        \begin{aligned}
            P(\hat{w},\mathcal{X}_\text{adv}):=\argmin~&\|w-\hat{w}\|_2^2\\
        \st~&
        w \in \bar{\mathcal{G}}( \mathcal{X}_\text{adv}), \\
        &(w-\hat{w})^\top\nabla_w \ell(w)\leq 0.
        \end{aligned}
    \end{equation}

As \eqref{eq:adv_qp_loss} is an approximation, we need a measure to evaluate the quality of solution $w$. For fine-tuning, we mainly care about the feasibility, or the violation, of the nonlinear constraints \eqref{eq:adv_org} and \eqref{eq:loss_org}. We define the total violation of the constraints in \eqref{eq:adv_org} as 
\begin{equation}
\label{eq:total_con}
\begin{aligned}
    V(w;\mathcal{X}_\text{adv}): =   \sum_{ \forall (\tilde{x},y) \in \mathcal{X}_\text{adv}} |\min_{i\neq y}  \{f_y(\tilde{x};w) - f_i(\tilde{x};w),0\} |,
\end{aligned}
\end{equation}
where $V(w;\mathcal{X}_\text{adv}) = 0$ indicates that the predicted labels of all adversarial data points have been corrected. Generally, a lower constraint violation $V(w;\mathcal{X}_\text{adv})$ is considered better, as it implies that the confidence in the misclassification is reduced. For evaluating the quality of the solution concerning constraint \eqref{eq:loss_org}, we use $\ell(w;\mathcal{X}_\text{train})$ as the metric.

There are two important aspects of problem \eqref{eq:adv_qp_loss} that we need to discuss. First, the linear approximations in \eqref{eq:adv_qp_loss} are only accurate close to the initial solution $\hat{w}$. In fact, the minimizer of \eqref{eq:adv_qp_loss} can be a worse solution to \eqref{eq:robust-project_org} compared to the initial solution. This can be dealt with by generating additional linearizations of the constraints, ideally building a better linear model. The approach for updating the linear model is discussed in the next section. Secondly, it is important to remember that the nonlinear constraints \eqref{eq:adv_org} and \eqref{eq:loss_org} are both nonconvex. As a consequence, $\bar{\mathcal{G}}( \mathcal{X}_\text{adv})$ and $(w-\hat{w})^\top\nabla_w \ell(w)\leq 0$ do not necessarily form an outer approximation of the feasible set. Still, the linearized constraints contain local information about the constraints and guide us towards regions of interest. Due to the nonconvexity, the convergence of our approach is not guaranteed. However, due to the size of problem \eqref{eq:robust-project_org}, the authors are not aware of any method that would be computationally tractable and guarantee optimality or feasibility of problem \eqref{eq:robust-project_org} for neural networks and data sets of relevant size.

The realistic goal of our algorithm is not to find an optimal solution of the adversarial correction problem \eqref{eq:robust-project_org}, nor necessarily a solution that satisfies all constraints, but to obtain a solution that is clearly better than the initial solution $\hat{w}$. By better, we mean in terms of the total constraint violation \eqref{eq:total_con} and in the loss function on the training data. We elaborate on how these two objectives are considered in Section~\ref{sec:bi-obj}.

\subsection{Updating the linear approximation}
By linearizing each constraint only at the initial point $\hat{w}$, we do not gain any insight into the curvature of the constraints, and the approximation only captures its behavior in the vicinity of $\hat{w}$. The minimizer of \eqref{eq:adv_qp_loss} is, therefore, usually a poor solution to \eqref{eq:robust-project_org}. To accumulate more information about the constraints, we expand the QP model by linearizing the constraints at the iterative solutions obtained by our algorithm similar to a Bundle method \cite{makela2002survey}, or Kelley's cutting plane method \cite{kelley1960cutting}. 

Assume we have performed $k$ iterations resulting in the trial, or potential, solutions $w^1, w^2, \dots, w^k$. From there, we form a set $\mathcal{W}_\text{cut}:= \{w^1, w^2, \dots, w^k\}$ that we will use to build our polyhedral approximation of the constraints. We can then form the approximation of the feasible set $ \mathcal{G}(\mathcal{X}_\text{adv})$ in \eqref{eq:adv_cons_nonlin} as
\begin{equation}\label{eq:qp_cut}
\begin{aligned}
\tilde{\mathcal{G}}(\mathcal{X}_\text{adv},\mathcal{W}_\text{cut}):= 
&\{ w\in \mathbb{R}^J \mid  f_i(\tilde{x};w^k)   - f_{y}(\tilde{x};w^k) \\
&+ (w-w^k)^\top (\nabla_w f_i(\tilde{x};{w^k})-\nabla_w f_{y}(\tilde{x};{w^k}))  +\delta \leq 0,\\
 &~\forall i\in I \setminus y,\ \forall w^k \in \mathcal{W}_\text{cut}, \ \forall (\tilde{x},y) \in \mathcal{X}_\text{adv}\}.
\end{aligned}
\end{equation} 
Similarly, we form a polyhedral approximation of the feasible set of constraint \eqref{eq:loss_org} by 
\begin{equation}
    \mathcal{H}(\mathcal{W}_\text{cut}):= \{w \in \mathbb{R}^J \mid \ell (w^k) - \ell(\hat{w})  + (w-w^k)^\top 
    \nabla_w \ell ({w^k})  \leq 0,\quad \forall w^k \in  \mathcal{W}_\text{cut}\}.
\end{equation}

As $\tilde{\mathcal{G}}(\mathcal{X}_\text{adv},\mathcal{W}_\text{cut})$ and $\mathcal{H}(\mathcal{W}_\text{cut})$ linearize the constraints at multiple points they can capture more details of the nonlinear constraints. We use $\tilde{\mathcal{G}}(\mathcal{X}_\text{adv},\mathcal{W}_\text{cut})$ and $\mathcal{H}(\mathcal{W}_\text{cut})$ to strengthen the approximated adversarial-correction problem \eqref{eq:adv_qp_loss} resulting in 
\begin{equation}\label{eq:qp_cut_iter}
\begin{aligned}
    P(\hat{w},\mathcal{X}_\text{adv}, \mathcal{W}_\text{cut}):=\argmin~&\|w-\hat{w}\|_2^2\\
        \st~&
        w \in \bar{\mathcal{G}}( \mathcal{X}_\text{adv}), \\
        & (w-\hat{w})^\top\nabla_w \ell(w)\leq 0,\\
        & w \in \tilde{\mathcal{G}}(\mathcal{X}_\text{adv},\mathcal{W}_\text{cut}),\\
        & w \in \mathcal{H}(\mathcal{W}_\text{cut}).
        \end{aligned}
\end{equation}

Note that \eqref{eq:qp_cut_iter} is a convex QP that typically can be solved efficiently. For large neural networks, the convex QP can still be computationally expensive. In Section~\ref{sec:impl}, we propose a simple algorithm to obtain an approximate solution to \eqref{eq:qp_cut_iter} by a block-coordinate-based approach to speed up the algorithm for large problems.  

One of the main components of our adversary-correction algorithm is iteratively solving \eqref{eq:qp_cut_iter}. After each iteration, the minimizer is added to the set $\mathcal{W}_\text{cut}$. Repeatedly solving \eqref{eq:qp_cut_iter}, with an increased number of cuts, gives us candidate solutions that typically improve in each iteration. The minimizer of \eqref{eq:qp_cut_iter} also gives us promising search directions.

We do not perform a typical line search, where we would search for the optimal step length in the given direction. Instead, we want to generate a set of alternative solutions. Denote the minimizer of \eqref{eq:qp_cut_iter} in iteration $k$ by $w^k$, then we can consider $d^k=(w^k - \hat{w})$ a search direction from the initial solution $\hat{w}$. It is well known that first-order methods can take too long steps, especially in early iterations. This is our motivation for further exploring the search space in the direction $d^k$ from the initial solution $\hat{w}$. We can then define a set of alternative solutions in iteration $k$ as 
$$\left\{\alpha w^k + (1-\alpha)\hat{w} \mid \  \alpha = 0.1, 0.2, \dots, 1 \right\}.$$
We could perform a more sophisticated line search, but the main goal here is to generate alternative solutions. A classical line search would also require a measure to combine the constraint violations and loss function, and we want to avoid combining them at this stage.

\subsubsection{Considering small changes in the adversarial data}

To further improve the robustness of a neural network, it would be desirable to not only focus on the data points in $\mathcal{X}_\text{adv}$ but also to restrict any small perturbation of these to be correctly classified. For example, we could consider an $\bar{\varepsilon}$ neighborhood around an adversarial data point $\tilde{x} \in \mathcal{X}_{adv}$ defined by the $\ell_1$-norm ball:
\begin{equation}
    B_{\bar{\varepsilon}}(\tilde{x}) := \{x\mid \|\tilde{x}-x\|_{1} \leq \bar{\varepsilon} \}=\{x \mid \sum_k |\tilde{x}_k -x_k | \leq \bar{\varepsilon} \},
\end{equation}
where $\bar{\varepsilon} >0$ is the radius. We cannot directly include all points from this $\bar{\varepsilon}$ neighborhood, but we can consider it in the approximated adversary-correction problem through a different linearization.

Consider the Taylor expansion of $f_i(x,w)$ at $(\tilde{x} ,\hat{w})$:
\begin{equation}\label{eq:taylor_w_x_adv}
    f_i(x,w) \approx f_i(\tilde{x},\hat{w}) + (w-\hat{w})^\top \nabla_w f_i (\tilde{x},\hat{w}) + (x-\tilde{x})^\top \nabla_x f_i (\tilde{x},\hat{w}),
\end{equation}
where $\nabla_x f_i (\tilde{x},\hat{w})$ is a (sub)gradient of $f_i$ with respect to the neural network input $x$ at the point $(\tilde{x},\hat{w})$. Note that \eqref{eq:taylor_w_x_adv} is a linear approximation for how the $i$-th output of the neural network changes with its parameters and with the input.
As before, we can use this linear approximation in the constraints to force the correct classification of the adversarial data point $\tilde{x}$. Consider the requirement that the classification confidence in label $i$ should be greater than  the confidence in label $j$: we get the linearized constraint
\begin{equation}
\label{eq:lin-rob}
\begin{aligned}
    f_j(\tilde{x},\hat{w})-f_i(\tilde{x},\hat{w}) + (w-\hat{w})^\top (\nabla_w f_j(\tilde{x},\hat{w})- \nabla_w f_i(\tilde{x}, \hat{w})) + \delta \\
   \leq - (x-\tilde{x})^\top (\nabla_x f_j(\tilde{x}, \hat{w})- \nabla_x f_i(\tilde{x}, \hat{w})).
\end{aligned}
\end{equation}
By setting $x=\tilde{x}$, this reverts back to the linearized constraints in \eqref{eq_lin-set1}. Instead, we want \eqref{eq:lin-rob} to hold for any $x\in B_{\bar{\varepsilon}}(\tilde{x})$. This can be achieved by using the robust formulation
\begin{equation}\label{eq:robust_x}
    \begin{aligned}
   &f_j(\tilde{x},\hat{w})-f_i(\tilde{x},\hat{w}) + (w-\hat{w})^\top (\nabla_w f_j(\tilde{x},\hat{w}) - \nabla_w f_j (\tilde{x},\hat{w})) + \delta \\
   &\leq - \max_{\|x-\tilde{x}\|_{1}\leq \bar{\varepsilon}}\{(x-\tilde{x})^\top (\nabla_x f_j(\tilde{x},\hat{w})- \nabla_x f_i(\tilde{x},\hat{w})) \}.
\end{aligned}
\end{equation}
As the maximum of a linear function over the $\ell_1$ ball is obtained at one of the vertexes, the constraint above simplifies into 
\begin{equation}\label{eq:robust_x2}
    \begin{aligned}
 &f_j(\tilde{x},\hat{w})-f_i(\tilde{x},\hat{w}) + (w-\hat{w})^\top (\nabla_w f_j(\tilde{x},\hat{w}) - \nabla_w f_j (\tilde{x},\hat{w})) + \delta
    \\
&\leq - \bar{\varepsilon} \cdot \max_m |{(\nabla_x f^m_{j}(\tilde{x},\hat{w})- \nabla_x f^m_{i}(\tilde{x},\hat{w}))}|,
\end{aligned}
\end{equation}

where $m$ denotes the index of $\nabla_x f(\cdot)$. Thus, the result is a strengthening of the original linearized constraints by subtracting the constant $\bar{\varepsilon} \cdot \max_m |{(\nabla_x f^m_{j}(\tilde{x},\hat{w})- \nabla_x f^m_{i}(\tilde{x},\hat{w}))}|$ from the right-hand side.

We can use this linearization approach for all the constraints in problem \eqref{eq:qp_cut_iter}. Ideally, this should account for some small perturbations on the adversarial data points. We noticed from the computational experiments that this approach seems to more quickly reduce the  $V(w;\mathcal{X}_\text{adv})$, but it did not significantly change the final result. Some results showing the impact of \say{strengthened} linearization are presented in Section~5. But, we chose to leave this strategy out of the main numerical experiments as it introduces an additional parameter $\bar{\varepsilon}$ without any clear benefits.

\subsection{Balancing between the adversary-correction and loss on training data}
\label{sec:bi-obj}
In practice, it is not clear whether the misclassification of all adversarial data points can be successfully corrected, and it is less likely that this can be achieved without sacrificing some performance on the training data. Therefore, we need to somehow balance between violation of $\mathcal{G}(\mathcal{X}_\text{adv})$ and the no-worse loss constraint (\ie, \eqref{eq:adv_org} and \eqref{eq:loss_org}). To select between the two, we will use some basic concepts from multiobjective optimization \cite{eichfelder2021twenty}. We first introduce the following definitions.

\begin{definition}[Pareto optimal]
We say that a solution $w$ is Pareto optimal if it is not dominated by any other solution, \textit{i.e.}, there is no other solution that is equally good in all objective functions and strictly better in one objective.   
\end{definition}

\begin{definition}[Efficient front / Pareto front]
  The efficient front consists of points that are all Pareto optimal.
\end{definition}

We have two main objectives when evaluating the candidate solutions: the violation of $\mathcal{G}(\mathcal{X}_{adv})$, \ie, $V(w;\mathcal{X}_\text{adv})$, and the loss of the training data, \ie, $\ell(w)$. From the set of candidate solutions $\mathcal{W}_t$, we can evaluate which of them are Pareto optimal in the two objectives over the set $\mathcal{W}_t$. Note that these are only Pareto optimal among the candidate solutions $\mathcal{W}_t$, and better solutions might exist if we consider all possible parameters of the neural network. Still, Pareto optimality gives us a simple filter for discarding some of the candidate solutions.

We use the weighted sum technique to select the final solution from candidate solutions. For a given weight $0<\omega<1$, the final solution is selected as the minimizer of 
\begin{equation}\label{eq:weighted_sum}
\begin{aligned}
             \min~&\omega \tilde{\ell}(w) + (1-\omega) \tilde{V}(w,\mathcal{X}_\text{adv}),\\
     \st~& w \in \mathcal{W}_{t},
\end{aligned}
\end{equation}
where  $\tilde{\ell}$ and $\tilde{V}$ denote the scaled values of  $\ell(w)$ and $V(w,\mathcal{X}_\text{adv})$  between 0 and 1, and $\mathcal{W}_t$ is the set of all candidate solutions. We scale both metrics as it makes the selection of the $\omega$ parameter more intuitive. From the computational results in Section~5, the algorithm seems quite robust with regards to the choice of $\omega$. As expected, a different choice of $\omega$ gave the best result for the different test problems, but the different $\omega$ values tested all gave reasonable final results. Filtering the candidate solutions based on Pareto optimality seems to be most important, as it excludes poor solutions.

To better illustrate this idea, we present the candidate solutions --  what we refer to as the solution pool -- of a CNN model after 5 iterations with Algorithm~\ref{alg:model_update} in Figure~\ref{fig:efficient_front}. The y-axis present the loss on $\mathcal{X}_{\text{train}}$, \ie, $\ell(w)$ for each solution, and the x-axis present the \texttt{vio\_abs\_sum}, \ie, $V(w,\mathcal{X}_\text{adv})$.
The efficient front is colored in orange, and the point representing the pre-trained model is colored in purple. Comparing the initial point with the efficient front, it shows that Algorithm~\ref{alg:model_update} can achieve an improvement in $V(w,\mathcal{X}_\text{adv})$ with a very little increase in the loss function $\ell(w)$ on the training dataset. Note that these results are obtained after only 5 iterations, and we present these results only to keep the figure clearer as it contains fewer points. Typically, we obtain significantly better results with more iterations. 
\begin{figure}
\centering
\includegraphics[width=0.7\textwidth]{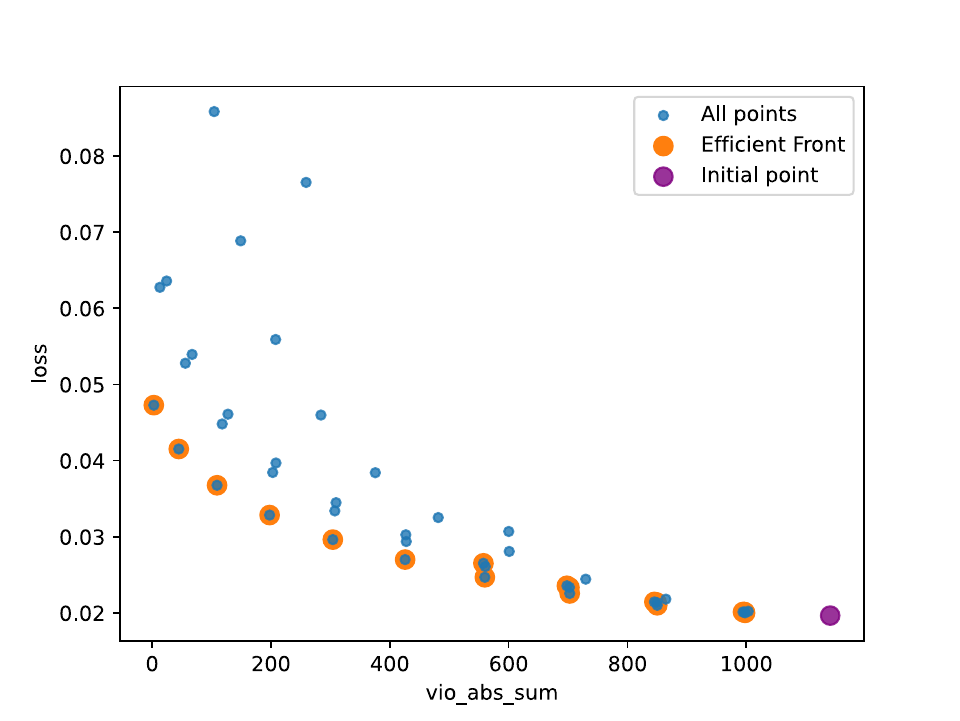}
\caption{Candidate solutions and their efficient front after 5 iterations with Algorithm~\ref{alg:model_update}, on the CNN model that is presented in Section~\ref{sec:experiments}. Note that we only ran 5 iterations to reduce the number of points for illustration purposes.}\label{fig:efficient_front}
\end{figure}

\subsection{Main algorithm}
We have now covered all components of our algorithm for improving robustness and finding an improved solution to \eqref{eq:robust-project_org}. The algorithm is presented as pseudo-code in Algorithm~\ref{alg:model_update}. In this algorithm, we use a maximum number of iterations $M$ as the stopping criteria. This is motivated by the fact that we are searching for an approximate solution to \eqref{eq:robust-project_org}, and limiting the number of iterations --and by extension the computational burden -- seems reasonable. Increasing the number of iterations generally produces better results, although the main improvements were often achieved within the first 15-20 iterations.
\begin{algorithm}[!htbp]
    \caption{The Cutting-plane based Adversary-Correction Algorithm}\label{alg:model_update}
\KwData{Pre-trained model parameters $\hat{w}$, $0<\omega <1$,
the maximum iterations number $M$, an adversarial dataset $\mathcal{X}_\text{adv}$\;}
\KwOut{$w$\;}

$\mathcal{W}_t \leftarrow \{w^0\} $  \hfill \# Initialize the solutions pool\;  
$\mathcal{W}_\text{cut} \leftarrow \emptyset$ \hfill \# Initialize the linear cuts set\;
\For{$k=1,\dots,M$}
    {$w^{k} \leftarrow P(w^0,\mathcal{X}_\text{adv},\mathcal{W}_\text{cut})$  \hfill\# solving \eqref{eq:qp_cut_iter}\;
   
     $\mathcal{W}_\text{cut} \leftarrow \mathcal{W}_\text{cut} \cup  w^{k}$\;
     $ \mathcal{W}_t\leftarrow \mathcal{W}_t \cup \{ \alpha w^{k} + (1-\alpha) w^0 \mid \forall  \alpha = 0.1,\dots, 1\} $

     \# Line search to expand the solutions pool\;
    }

    $w\leftarrow \argmin_{w \in  \mathcal{W}_t}\{\omega \tilde{\ell}(w) + (1-\omega) \tilde{V}(w,\mathcal{X}_{adv})  \}$\;

\end{algorithm}
In the next section, we describe some implementation details and how we can find an approximate solution to \eqref{eq:qp_cut_iter} in order to speed up the computations for larger neural networks.
\section{Implementation details}
\label{sec:impl}

The whole pipeline of the numerical experiment, \eg, the data preparation process, neural network training, and optimization modeling, are implemented on a MacBook Pro with an Apple M2 Pro CPU, 10 cores, and 32 GB memory. We trained neural networks using \textsc{PyTorch 2.2.1} \cite{paszke2017automatic}, and the optimization problems were solved using \textsc{Gurobi 11.0.1} \cite{gurobi}. The details of model architecture and data generation can be found at \hyperlink{https://github.com/shudianzhao/Adversary-correction}{https://github.com/shudianzhao/Adversary-correction}.

For larger neural networks, with over $10^5$ parameters, and with larger sets $\mathcal{X}_\text{adv}$ and $\mathcal{W}_\text{cut}$, it can become computationally costly to solve \eqref{eq:qp_cut_iter}. 
To tackle the increasing computational complexity and speed up the computations, \eqref{eq:qp_cut_iter} is solved by fixing a subset of variables and iterating in a block coordinate descent fashion.

An important realization is that we do not need an optimal solution to the convex QP problem \eqref{eq:qp_cut_iter}. Instead, we need to obtain a feasible solution -- ideally with an objective value close to the optimum. For this purpose, we propose a simple block coordinate optimization algorithm. The algorithm is designed for problems of the type
\begin{equation}
\label{eq:QP-fixed}
\begin{aligned}
    \min~&\|w-\hat{w}\|_2^2 \\
    \textrm{s.t.}~& w \in \mathcal{L}\subset \mathbb{R}^N,
\end{aligned}
\end{equation}
where $\mathcal{L}$ is defined by linear constraints. We start by initializing all variables as $\hat{w}$ and optimize over a subset of variables and keep the other variables fixed to their corresponding values in the initial state $\hat{w}$. If the problem is feasible, we can update the solution as the initial state, and optimize over a new subset of variables. If the first selection of the subset resulted in an infeasible problem, one could increase the size of the subset of variables that are optimized over or simply select a new subset. However, in practice, we never encountered a case when fixing variables and optimizing over a subset resulted in an infeasible problem. Partly because there are relatively few constraints compared to the number of variables and partly because we still kept a large number of variables that we optimized over. 

This block coordinate optimization process is simple, but there is one aspect that deserves a more detailed discussion. Let $J_v$  denote the subset of variables that we optimize over in each step. When updating the subsets of variables $J_v$, one should be careful and ensure that there is an overlap with the previous subset of variables. This is described in the following proposition.
\begin{proposition}
Assume that the variables with indices $\{1, \dots, N\}\setminus J_v$ in \eqref{eq:QP-fixed} are fixed to their corresponding values in $\hat{w}$ and the resulting problem is feasible with a minimizer denoted by $\bar{w}$. Then by selecting any new subset of variables $J_v^+$, such that $J_v \cap J_v^+ = \emptyset$,
we cannot obtain a better solution by optimizing over the variables $J_v^+$ and fixing the rest to $\bar{w}$.
\end{proposition}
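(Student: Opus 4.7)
The plan is to exploit two structural facts that together trivialize the restricted second-stage problem: (i) the objective $\|w - \hat{w}\|_2^2$ is separable across coordinates and pointwise nonnegative, and (ii) the disjointness assumption $J_v \cap J_v^+ = \emptyset$ forces the coordinates we are now free to move to already sit at their $\hat{w}$-values in $\bar{w}$.

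First I would take an inventory of the values in $\bar{w}$. By construction of the first block step, $\bar{w}_i = \hat{w}_i$ for every $i \in \{1,\dots,N\}\setminus J_v$, while $\bar{w}_i$ may differ from $\hat{w}_i$ only on $i \in J_v$. The hypothesis $J_v^+ \cap J_v = \emptyset$ gives $J_v^+ \subseteq \{1,\dots,N\}\setminus J_v$, so $\bar{w}_i = \hat{w}_i$ for every $i \in J_v^+$.

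Next I would set up the restricted second-stage problem explicitly. Fixing $w_i = \bar{w}_i$ for $i \notin J_v^+$ and varying only $w_i$ with $i \in J_v^+$, the objective decomposes as
\begin{equation*}
\|w-\hat{w}\|_2^2 \;=\; \sum_{i\in J_v^+}(w_i - \hat{w}_i)^2 \;+\; \sum_{i\notin J_v^+}(\bar{w}_i - \hat{w}_i)^2,
\end{equation*}
where the second sum is a constant with respect to the decision variables. Minimizing the first sum, which is a nonnegative separable quadratic, gives zero precisely at $w_i = \hat{w}_i$ for all $i \in J_v^+$. By the coordinate inventory above, this assignment coincides with $\bar{w}$ itself, and $\bar{w}$ is feasible by hypothesis. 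Hence the (coordinate-wise unique) unconstrained minimizer is already feasible for the restricted problem, so no strict improvement over $\bar{w}$ is possible, which is the claim.

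There is no genuinely hard step here; the argument is a separability plus bookkeeping observation. The only subtlety I would be careful about is keeping the roles of $\hat{w}$, $\bar{w}$, and the candidate $w$ distinct over the two index subsets, which is why I would carry out the coordinate-value inventory before invoking the decomposition of the objective.
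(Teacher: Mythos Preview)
Your proof is correct and follows essentially the same approach as the paper's: observe that $J_v^+ \subseteq \{1,\dots,N\}\setminus J_v$ forces $\bar{w}_i = \hat{w}_i$ on $J_v^+$, then use separability of the squared-norm objective to conclude that these coordinates already minimize their contribution, so the feasible point $\bar{w}$ cannot be improved. Your write-up is somewhat more explicit about the coordinate decomposition, but the argument is the same.
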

\begin{proof}
As $J_v^+ \subset \{1,\dots,N\}\setminus J_v$, it is clear that $\bar{w}_i = \hat{w}_i \ \ \forall i \in J_v^+$. As we assumed that the problem was feasible by optimizing over the variables $J_v$, we know that $\bar{w} \in \mathcal{L}$. Furthermore, with $w_i=\bar{w}_i$ fixed for all $i \in J_v$, $\bar{w}$ will remain the optimal solution when optimizing over $J_v^+$, as $w_i=\hat{w}_i ~\forall i \in J_v^+$ minimizes their contribution to the objective.
\end{proof}

From the proposition, it is clear that we need to keep an overlap between the subsets of variables we optimize; otherwise, we will get stuck in the first feasible solution found. The algorithm is presented as pseudo-code in Algorithm~\ref{alg:approximation_alternate}.

\begin{algorithm}
     \caption{Alternating Approximation Algorithm for QP \eqref{eq:QP-fixed}}\label{alg:approximation_alternate}
\KwData{The feasible set $\mathcal{L}$, the variable index $I:= \subset \{1,\dots,N\}$,  the fixed-ratio $0<p<1$, the maximum iterations number $T$, initial state $\hat{w} \in \R^{N}$\;}
\KwOut{$w$\;}
$J_0\leftarrow \emptyset$\;
$w^0 \leftarrow \hat{w}$\;
$J_v \leftarrow j_1,\dots,j_{(1-p)|I|} \sim U(I)$ \hfill \# Uniformly sample $(1-p)|I|$ indices from $I$\;
\For{$k=1,\dots,T$}
{   
    $w^k \leftarrow \argmin\{ \|w-\hat{w}\|^2_2 \mid w\in \mathcal{L}\subset \mathbb{R}^N, w_j=w^{k-1}_j,~ \forall j \notin J_v\cup J_0\}$\;
    $J\leftarrow I \setminus (J_v \cup J_0)$ \;
    $J_0 \leftarrow j_1,\dots, j_{(1-p)|I|/2} \sim U(J_v\cup J_0)$
    \hfill \#Keep half of the selected indices from $J$\;
    
    $J_v \leftarrow j_1,\dots, j_{(1-p)|I|/2} \sim U(J)$
    \hfill \#Re-select variables to optimize\;

}
\end{algorithm}
The selection of the fixing ratio is a trade-off between computational efficiency and approximation quality. Primary results show that even when fixing more than 50\% of variables, good solutions could be obtained after only a few iterations.

For the neural networks, we use a slightly more tailor-made selection strategy for selecting the variables to optimize in each step. We always keep the bias of each node as a variable, and we randomly select a proportion of variables in each layer. This ensures some flexibility to the optimization problem as the output of each node can be adjusted. 

For the larger neural network ResNet8, we tackle the complexity of \eqref{eq:QP-fixed} by fixing 80\% of the weight variables in each layer while keeping all the bias variables free. We performed an initial test varying the number of maximum iterations $T$, and in the end, decided on using $T=1$ considering that more iterations did not lead to any clear improvements. Although it could lead to better solutions to the QP problem \eqref{eq:QP-fixed}, it did not result in a final neural network with better properties. Keep in mind, for Algorithm~\ref{alg:model_update}, we only need a feasible solution to the QP problem \eqref{eq:QP-fixed}, and a solution with a slightly lower objective will not necessarily perform better in reducing $\ell(w)$ and $V(w,\mathcal{X}_\text{adv})$.

For the other neural networks, we did not use Algorithm~\ref{alg:approximation_alternate}, as we can directly solve the corresponding \eqref{eq:QP-fixed} using Gurobi.

\section{Computational experiments}
\label{sec:experiments}

Here, we test the performance of the proposed fine-tuning method Algorithm~\ref{alg:model_update} by applying it to two benchmark datasets with different complexity and to three neural networks with different architectures. The benchmark data sets are listed below:

\begin{itemize}
    \item MNIST \cite{deng2012mnist}: A dataset of handwritten digits of numbers 0 to 9, with each image of  28x28 pixels. The MNIST database contains 60,000 training images and 10,000 testing images. In the reprocessing, the pixel values of each image are scaled to between $0$ and $1$.
    \item CIFAR10 \cite{alex2009learning}: A dataset of 32x32 color images in 10 classes, \ie, airplanes, cars, birds, cats, deer, dogs, frogs, horses, ships, and trucks.
     There are 50,000 training images and 10,000 test images. 
    In the pre-processing, each image is 
    random cropped with reflection padding of  size $4$, flipped horizontally with probability of $0.5$,  and normalized with mean $(0.4914, 0.4822, 0.4465)$ and standard deviation $(0.2023, 0.1994, 0.2010)$ similar to the approach in \cite{kagglecifar10dataaug2019}.
\end{itemize}

\noindent It should be noted that CIFAR10 is commonly known to be a significantly more challenging benchmark test than MNIST.

We generate the adversarial dataset $\mathcal{X}_\text{adv}$ in the following way: First, we apply the projected gradient
descent (PGD) method on $\mathcal{X}_{\text{train}}$: Then, all the perturbed data are sorted in descending order according to their violation metrics  
$V(\cdot)$; Finally, and the first $|\mathcal{X}_{\text{adv}}|/10$ are selected for each label.
We implement the process by implemented package \textsc{CleverHans}~\citep{papernot2018cleverhans} with the following configurations similar as in \cite{madry2017towards}:
\begin{itemize}
    \item MNIST: run $50$ iterations of projected gradient descent as our adversary, with a step size of $0.01$, $\ell_{\infty}$-norm with radius $0.1$.
    \item CIFAR10: run $3$ iterations of projected gradient descent with a step size of $3/255$, $\ell_{\infty}$-norm with radius $8/255$.
\end{itemize}

To measure the robustness of neural networks, we test their accuracy on adversarial datasets generated by PGD, and the Fast Gradient Sign Method (FGSM)~\citep{goodfellow2014explaining} from CleverHans using the default test dataset. The configuration for PGD is the same as for the training process, and the configuration for FGSM has the same radius with  $\ell_{\infty}$ norm.

We choose to use different architectures for each dataset. 
For the MNIST dataset, we choose convolutional neural networks with two sizes: 
\begin{itemize}
\item CNNLight: A ConvNet architecture with a smaller size, which includes 2 convolutional layers, with 8 and 16 channels (each with a stride of 2, to decrease the resolution by half without requiring max pooling layers), and 2 fully connected layers stepping down to 50 and then 10 (the output dimension) hidden units, with ReLUs following each layer except the last. The total number of trainable parameters is 40908.
    \item CNN: A ConvNet architecture that includes 2 convolutional layers, with 16 and 32 channels (each with a stride of 2, to decrease the resolution by half without requiring max pooling layers), and 2 fully connected layers stepping down to 100 and then 10 (the output dimension) hidden units, with ReLUs following each layer except the last. The total number of trainable parameters is 162710. 
\end{itemize}

For CIFAR10, we choose the residual network due to the increasing complexity of classifying the images. Details on the neural network architecture:
\begin{itemize}
    \item ResNet8: An initial convolutional layer with 16 channels, followed by 3 groups of residual blocks. Those containing 2 convolutional layers have 16 and 32 channels (with a stride of 2 for downsampling), and the last one has  64 channels (also with a stride of 2 for downsampling). Finally, the model has an adaptive average pooling layer followed by a fully connected layer of 64x10. The residual connections are from conv1 to conv3, conv3 to conv5, and conv5 to  conv7. The total number of trainable parameters is 174970.
\end{itemize}
The neural networks are trained using standard procedures. The convolutional layer neural networks (\ie, CNNLight and CNN) are trained with 10 epochs, with a batch size of 64. The optimizer is Adaptive Moment Estimation (Adam) with an initial learning rate of 0.7 with a decaying factor of 0.7 and stepsize of 1. The ResNet8 models are trained with 10 epochs with a batch size of 64, and the optimizer is Stochastic Gradient Descent (SGD) with a learning rate of 0.01 and a momentum of 0.9.

As a reference point, Table~\ref{tab:train_with_adv} shows results obtained by directly adding the adversarial data $\mathcal{X}_{\text{adv}}$ to the training data. $\mathcal{X}_{\text{test}}$ denotes the test dataset, and FGSM and PGD denote the perturbed datasets generated by corresponding attackers. We also include the incremented percentage points in the brackets for comparison with the pre-trained models. The results here point out that a very small number of adversarial data points makes very little impact within a standard training framework.  
With $
|\mathcal{X}_{\text{adv}}|=50$, the models have a similar performance to the pre-trained model with default settings.
\begin{table}[htb!]
    \centering
    \begin{tabular}{rrrrr}
    \hline
    
         \multirow{2}{*}{Model} &   \multirow{2}{*}{Dataset}
         &\multicolumn{3}{c}{Accuracy (\%)}\\      
         \cline{3-5}
         && $\mathcal{X}_{\text{test}}$ & FGSM &  PGD \\
         \hline
         CNNLight & MNIST &  98.02 (+0.21) & 9.07 ($-$3.31) & 3.36 ($-$0.38) \\
         \hline
         CNN & MNIST &   98.74 (+0.19) & 15.98 (+0.31) &  5.53 (+1.51) \\
         \hline
         ResNet8 & CIFAR10 & 75.54 ($-$7.27) & 34.87 (+1.48) & 33.00 (+2.50) \\
         \hline
    \end{tabular}
    \caption{Performance of models trained with adversarial data ( $|\mathcal{X}_{\text{adv}}|=50$) added to the training. The numbers in parenthesis show the difference compared to the initial models. }
    \label{tab:train_with_adv}
\end{table}

\begin{figure}[htb!]
    \centering
    \subfloat[FGSM]{
    \includegraphics[width=0.47\linewidth]{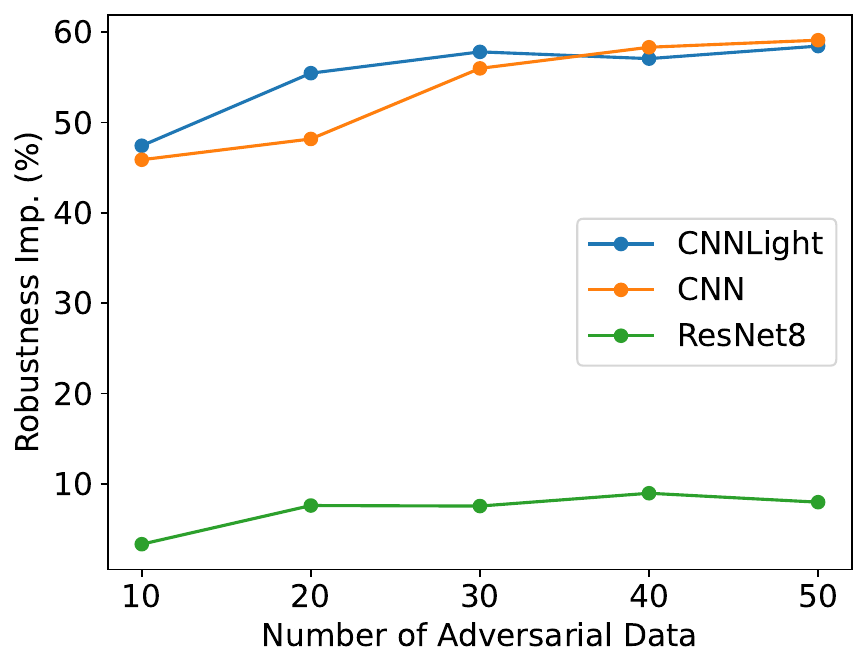}}
    \subfloat[PGD]{
    \includegraphics[width=0.47\linewidth]{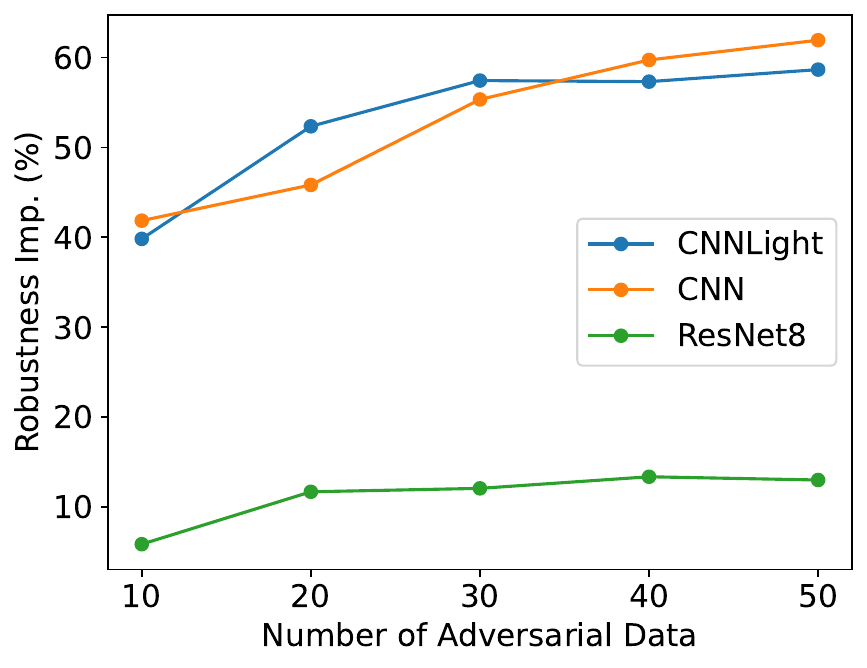}
    }
    \caption{Improvements of robustness in percentage points with $|\mathcal{X}_{adv}|=\{10,20,30,40,50\}$ using Algorithm~\ref{alg:model_update} with $M=2$.}
    \label{fig:acc_inc}
\end{figure}
Figure~\ref{fig:acc_inc} shows the increased percentage points for accuracy obtained by Algorithm~1 on adversarial data generated by FGSM and PGD approaches, while more details are given in later subsections. It can be seen that the improvements generally increase as more adversarial data are included in the fine-tuning process. 
CNNLight and CNN show a similar performance regarding the incremented improvements, where the fine-tuning process can achieve an improvement of more than 40 percentage points even with $|\mathcal{X}_{\text{adv}}|=10$ for both FGSM and PGD; when $|\mathcal{X}_{\text{adv}}|$ increases to $50$, the improvements increase up to 60 percentage points.
For ResNet8 (with CIFAR10), the improvement is smaller when only 10 adversarial data points are included, but the improvements increase up to 10 percentage points for FGSM and up to 15 points for PGD when more data are included. The improvement is less for ResNet8, but still clearly significant. Due to the more challenging dataset and the more complex model architecture, we would most likely need more adversarial data with ResNet8 to obtain larger improvements.

\subsection{Linearized input perturbations}
Here, we present some initial results obtained by the alternative linearizations that consider linearized perturbations to the adversarial data.
Figure~\ref{fig:epsilon_x} presents how $V(w,\mathcal{X}_{\text{adv}})$ and PGD accuracy change over the iterations of Algorithm~\ref{alg:model_update} with $\bar{\varepsilon}$ of $\{0, 0.05\}$.  The violation metrics drop very quickly from $\sim$250 to $\sim$25 in the first 3 iterations under both circumstances. Therefore, we only include the results from iterations 3-20 for better visualization in Figure~\ref{fig:vio_epsilon_x}. The solutions with $\bar{\varepsilon}=0.05$ always have less $V(w,\mathcal{X}_{\text{adv}})$ compared to the solutions with $\bar{\varepsilon}=0$. This indicates that a small perturbation in the adversary data (\ie, inequality ~\eqref{eq:robust_x2}) can help to reduce the violation regarding adversary correction constraints more effectively. 
In Figure~\ref{fig:pgd_epsilon_x},  the robustness against PGD attackers is similar for the two settings. This initial result shows a possibility of better performance with a smart selection of $\bar{\varepsilon}$. It is worth pointing out here that a through hyper-parameter tuning with varying values of $\bar{\varepsilon}$ can further improve the results, but we will limit the experiments with $\bar{\varepsilon}=0$ in the following section to illustrate the performance of Algorithm~\ref{alg:model_update}.

\begin{figure}
    \centering
\subfloat[$V(w,\mathcal{X}_{adv})$ over iteration 3 to 20\label{fig:vio_epsilon_x}]{\includegraphics[width=0.47\linewidth]{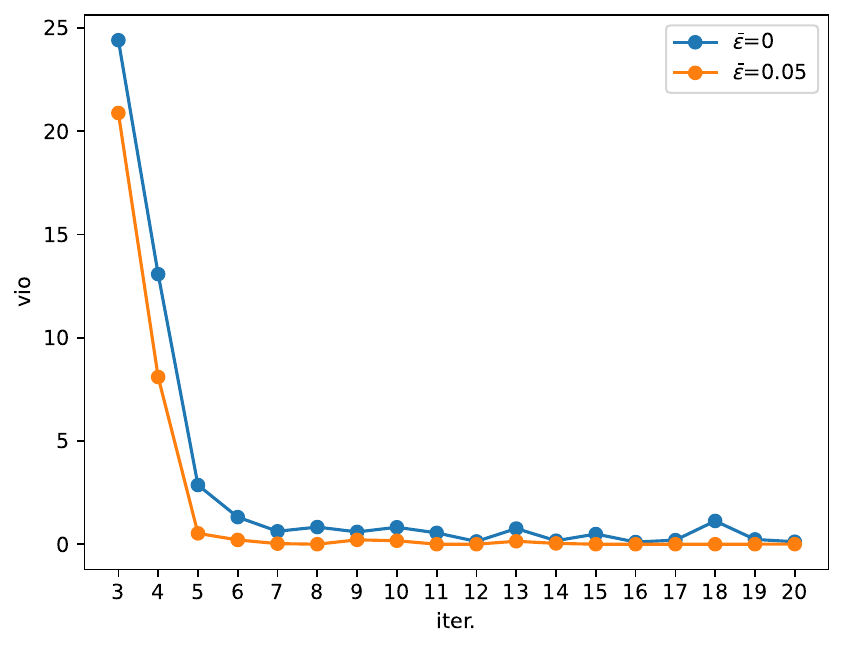}}
\subfloat[PGD accuracy over 20 iterations with $\omega = 0$\label{fig:pgd_epsilon_x}]{\includegraphics[width=0.47\linewidth]{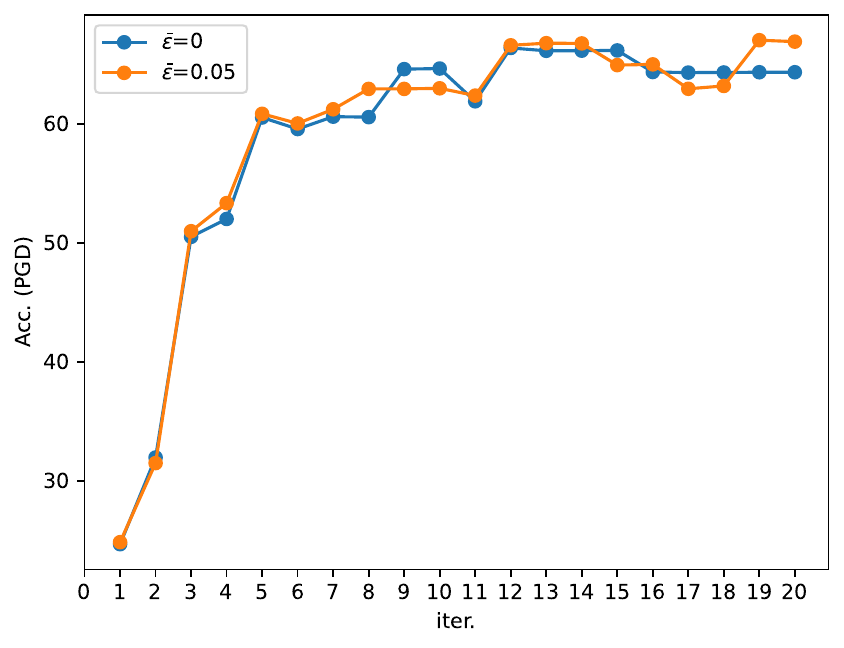}}
\caption{The comparison between $\bar{\varepsilon} = \{0, 0.05 \}$ with $|\mathcal{X}_{adv}|=50$, and $\omega =0$.}\label{fig:epsilon_x}
\end{figure}

\subsection{Accuracy for MNIST}
Table~\ref{tab:mnist_acc_cnnlight} and ~\ref{tab:mnist_acc} present the results of fine-tuned CNNLight and CNN on the MNIST dataset by \eqref{alg:model_update} with different configurations. We present the accuracy on test dataset $\mathcal{X}_{\text{test}}$ and the robustness (\ie, accuracy on FGSM and PGD) with the number of adversarial datapoints, \ie, $|\mathcal{X}_\text{adv}|$,  of $\{10,20,30,40,50\}$,  and $\omega$ of $\{0, 0.2, 0.4\}$.

All the experiments are set up with $M=20$, which is adequate to reduce $V(w,\mathcal{X}_\text{adv})$ below a threshold $0.5$ in preliminary tests for CNNLight and CNN. Both tables show that the loss on $\mathcal{X}_{\text{train}}$ and accuracy on $\mathcal{X}_{\text{test}}$ are kept close to the pre-trained model within 20 iterations.

In Table~\ref{tab:mnist_acc_cnnlight} and \ref{tab:mnist_acc}, the performance of the pre-trained model (\ie, $|\mathcal{X}_\text{adv}|=0$) is presented as a baseline.
The best result for each $|\mathcal{X}_\text{adv}|$ is marked in bold.

We present the results with $\omega $ of $ \{0, 0.2, 0.4\}$ to provide some insight into how it affects the results. In Table~\ref{tab:mnist_acc_cnnlight}, when $\omega= 0.4$, the loss of the fine-tuned model is slightly higher than the baseline, but the accuracy on $\mathcal{X}_{\text{test}}$ remains the same. While decreasing the $\omega$ from $0.4$ to $0.2$ or $0$, the accuracy on FGSM and PGD increases but the MNIST accuracy drops. Table~\ref{tab:mnist_acc} shows a similar pattern with various $\omega$, but a overall better performance under various circumstances.

\begin{table}[hbtp!]
    \centering
    \begin{tabular}{rrrrrr}
    \hline
\multirow{2}{*}{$|\mathcal{X}_\text{adv}|$} &\multirow{2}{*}{$\omega$}
&\multicolumn{2}{c}{MNIST} & \multicolumn{2}{c}{Adversary acc.(\%)}\\
  &  & loss   & acc.(\%) & FGSM  & PGD \\
\hline
0  &  & 0.0516 & 97.81 &	12.38 &	3.74\\
\hline
10&0.0 & 0.0644 & 97.52 & 58.02 & 42.77\\
&0.2 & 0.0603 &  97.68 &  \textbf{59.80} &  \textbf{43.56} \\
&0.4 & 0.0587& \textbf{97.69} & 57.09 & 37.95 \\
\hline
20&0.0&  0.0685  &  \textbf{97.65} & \textbf{67.83}	& \textbf{56.08} \\
&0.2 & 0.0670 &  97.51 & 66.67  & 55.55 \\
&0.4 & 0.0619 &  97.64 & 61.65 & 45.48 \\
\hline 
30 &0.0 & 0.0650& 97.70 &68.53	& 59.53\\
&0.2& 0.0611 & 97.68 & \textbf{70.19}  &  \textbf{61.17} \\
&0.4& 0.0579&  \textbf{97.79} & 65.55 & 52.06 \\
\hline
40& 0.0 &  0.0668&  97.66 & 69.35	& \textbf{61.05}\\
&0.2 & 0.0610  & 97.89 & \textbf{69.44} & 60.15 \\
&0.4 & 0.0573  & \textbf{97.92} & 64.96 & 50.99 \\
\hline 
50& 0 & 0.0622 & 97.82  &	\textbf{70.83} &	\textbf{62.39}  \\
&0.2 & 0.0602 & 97.90 &  69.10 &59.08\\
&0.4 & 0.0571 & \textbf{97.89} & 63.02 & 45.90\\
\hline
\end{tabular}
    \caption{Results from fine-tuning CNNLight with Algorithm~\ref{alg:model_update} and $M=20$.}
    \label{tab:mnist_acc_cnnlight}
\end{table}

\begin{table}[hbtp!]
    \centering
    \begin{tabular}{rrrrrr}
    \hline
\multirow{2}{*}{$|\mathcal{X}_\text{adv}|$} &\multirow{2}{*}{$\omega$}
&\multicolumn{2}{c}{MNIST} & \multicolumn{2}{c}{Adversary acc.(\%)}\\
  &  & loss   & acc.(\%) & FGSM  & PGD \\
\hline
0  &  & 0.0197 & 	98.55&	15.67 &	4.02 \\
\hline
10 & 0.0 & 0.0354  & 98.19 &	59.49&43.76\\
& 0.2 & 0.0269 &  98.46 & 61.54 &  45.73\\
&0.4 & 0.0269 & \textbf{98.46} & \textbf{61.54} & \textbf{45.87}\\
 
\hline
20 & 0.0 & 0.0276 & \textbf{98.54} &	\textbf{63.84}& 	\textbf{49.84} \\
& $0.2$ & 0.0276 & 98.54 &	63.84& 	49.84 \\
& $0.4$ &  0.0252 & 98.53 &	58.60 &	 39.71 \\
\hline
30& 0.0 &0.0296 & 98.31 & 70.92 & 58.87\\
& 0.2 & 0.0268 & 98.39 & \textbf{71.65} & \textbf{59.35} \\
& 0.4  &  0.0255  &\textbf{98.49}	&  65.65	& 48.58\\ 
\hline
40& 0.0 &0.0293 & 98.37 & 72.16 & 62.23 \\
&0.2 & 0.0289 &98.42  & \textbf{73.99} & \textbf{63.74} \\
&0.4 &0.0260 &\textbf{98.55}&	68.47&	53.98 \\
\hline 
50&0.0 & 0.0310 & 98.44& 73.79	& 64.33 \\
&0.2 & 0.0290  & 98.46 & \textbf{74.77} & \textbf{65.94} \\
&0.4 & 0.0262 & \textbf{98.52} &71.24 &  57.70\\

\hline 
\end{tabular}
    \caption{Results from fine-tuning CNN with Algorithm~~\ref{alg:model_update} and $M=20$.}
    \label{tab:mnist_acc}
\end{table}

\subsection{Accuracy on CIFAR10}
Table~\ref{tab:cifar10_resnet8} presents the fine-tuning results of ResNet8 on CIFAR10. Compared to the MNIST dataset, CIFAR10 is a more complicated dataset, and a larger model is needed to achieve a decent accuracy.
The pre-trained ResNet8 has a prediction accuracy of 82.79\% on CIFAR10, and 33.39 \% (resp. 30.50 \%) on adversarial data generated by FGSM (resp. PGD). 
After the fine-tuning process with $|\mathcal{X}_\text{adv}|$ up to 50, the accuracy on FGSM and PGD increase up to 42.35 \% and  43.86\%.

Similarly, as for CNNLight and CNN, even using only 10 adversarial data can achieve a noticeable improvement in adversarial data classification (see Table~\ref{tab:cifar10_resnet8}). Even though the growing pace is slower, the improvement is further enlarged when $\mathcal{X}_{adv}$ is included in the fine-tuning process. The trend is that more adversarial data results in improved robustness, but the improvements are smaller than we obtained for MNIST. As the model is larger and, in general, more difficult to train on the data set, we might need more adversarial data and extra iterations (\ie, $M>20$) to obtain larger improvements. But, the computational burden is noticeable when fine-tuning ResNet8 models. We chose not to run more iterations using more data, as the results clearly show that concepts work and can improve robustness.

\begin{table}[hbtp!]
    \centering
\begin{tabular}{rrrrrr}
    \hline
\multirow{2}{*}{$|\mathcal{X}_\text{adv}|$} &\multirow{2}{*}{$\omega$}
&\multicolumn{2}{c}{CIFAR10} & \multicolumn{2}{c}{Adversary acc.(\%)}\\
  &  & loss   & acc.(\%) & FGSM  & PGD \\
\hline
0  &  & 0.44 & 82.79	& 33.39	&30.50 \\
\hline
10 & 0.0  & 0.46	& \textbf{82.71} & 36.31 & 35.97\\
&0.2 &  0.46 & 82.20 & \textbf{36.71} & \textbf{36.36}  \\
& 0.4 &  0.44 & 82.64 &  35.79 & 34.95 \\
\hline
20&0.0 & 0.49  & 81.49 &	38.12 &39.20 \\
&0.2 & 0.46  & \textbf{81.92} & \textbf{40.99} & \textbf{42.19}\\
&0.4 & 0.46 &  82.57 &  40.14 & 41.25\\
\hline 
30&0.0 & 0.50 & 81.24& \textbf{40.93}&  \textbf{42.57}\\
&0.2 &  0.46 & \textbf{82.19}& 39.67 &40.52\\
&0.4 &  0.46 & 82.09 & 40.84 &42.22\\
\hline
40 &0.0 & 0.47 & 82.18 & \textbf{42.35} &	\textbf{43.86}  \\
&0.2 &0.46& 82.56 & 41.29 & 43.26 \\
&0.4 & 0.46 & \textbf{82.66} & 41.59 & 43.18 \\
\hline 
50&0.0  & 0.46& \textbf{82.34} & \textbf{41.36} &\textbf{43.50} \\
&0.2 & 0.46 & 82.12 & 41.12 & 43.12\\
&0.4 & 0.45 & 83.16 &  39.69 & 40.86\\
\hline
\end{tabular}
    \caption{Results from fine-tuning ResNet8 with Algorithm~\ref{alg:model_update} and $M=20$.}
    \label{tab:cifar10_resnet8}
\end{table}

\subsection{Computational Efficiency}
The fine-tuning process Algorithm~\ref{alg:model_update} includes iteratively solving a growing QP problem, evaluating intermediate neural networks, and extracting the gradient information in order to generate auxiliary linear constraints. Table~\ref{tab:run_time} presents the running time of solving each QP problem under various conditions, and the evaluation time of computing loss on the training dataset for neural networks with different architectures, with the aim of providing insights into the performance of each component. 
We break down the running time into two categories (\ie, optimization-related and evaluation-related) to further analyze the computational efficiency of this algorithm. The optimization-related computation consists of the time it takes to construct the QP problems and solve them using Gurobi. The evaluation-related computation refers to the evaluation of neural networks in order to access gradient information. 
\begin{figure}
  \centering
   \subfloat[$|\mathcal{X}_\text{adv}|=10$]{\includegraphics[width=0.47\textwidth]{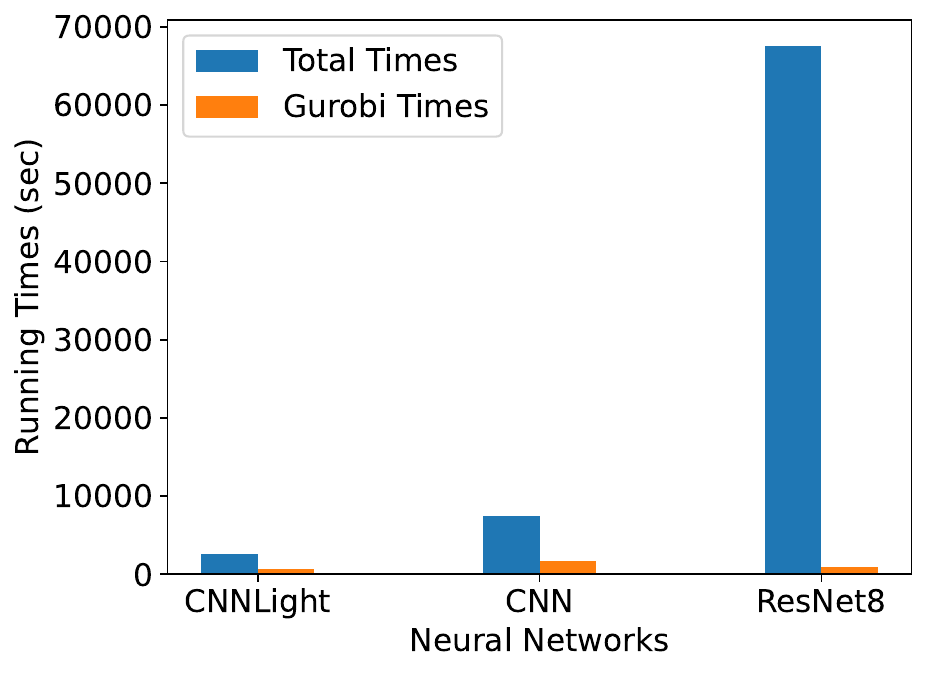}}
    \subfloat[$|\mathcal{X}_\text{adv}|=50$]{\includegraphics[width=0.47\textwidth]{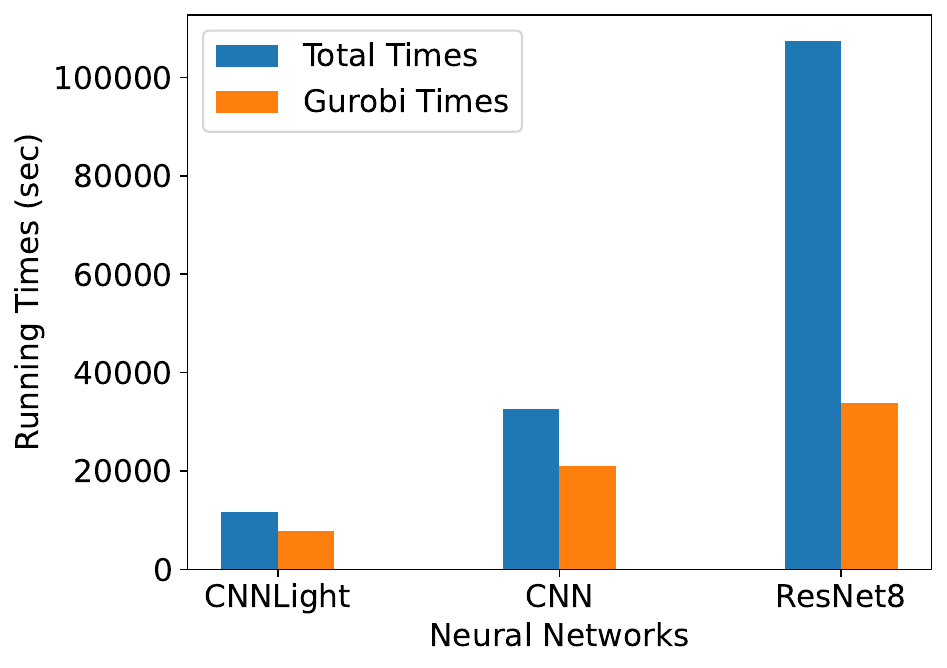}}
    \caption{This figure shows the total running time and the time spent on solving the QP problems \eqref{eq:qp_cut_iter} (\ie, running time of Gurobi).}\label{tab:running_time2}
\end{figure}
Figure~\ref{tab:running_time2} illustrates the total fine-tuning time and optimization-related time (\eg, Gurobi running times) for different neural network architectures. For CNNLight, the total fine-tuning time is between 2623.46 sec and 11763.83 sec where the number of adversarial datapoints (\ie, $|\mathcal{X}_\text{adv}|$) are between 10 and 50. The total optimization-related time lies between 624.37 sec and 7830.74 sec.
For CNN, the total running time is between 7501.90 sec and 32514.06 sec for $|\mathcal{X}_\text{adv}|$ between 10 and 50, and the total optimization-related time is between 1699.61 sec and 21051.21 sec.
For ResNet8, due to the increasing size of the neural network, the total running is between 67518.09 sec to 107319.27 sec with $|\mathcal{X}_\text{adv}|$ from 10 to 50, where the total optimization-related time is between  979.31 sec to 33763.22 sec respectively.

The above results are from experiments conducted only on CPU, and they shown that the optimization-related computations only accounts for 20\% - 65\% of the total execution time for CNNs, and accounts for even less, 1\% -  31\%, when considering the larger neural networks, \ie, ResNet8.

We want to stress that the computations could be sped up significantly. Many of the computations can easily be run in parallel, as we discuss more in detail in the following subsections. Additionally, we have not used the best machine for running experiments of this type. As an example, the time for evaluating the neural networks could potentially be reduced by a factor of 10 (as the main evaluations could be run in 10 parallel processes). We have not investigated this further, as our main goal with this paper is to provide a proof of concept -- that this approach can successfully be used in order to improve the robustness of neural networks. We believe the numerical results clearly show the feasibility and value of the proposed method.

\subsubsection*{The optimization-related computation}
The difficulty of solving QP problems directly relates to the number of constraints and variables. In our setting, the number of variables is proportional to the size of the neural network. The number of constraints is proportional to the size of $\mathcal{W}_\text{cut}$ and $\mathcal{X}_\text{adv}$, which is reflected by the neural network type, iteration number $k$ and $|\mathcal{X}_\text{adv}|$ respectively in Table~\ref{tab:run_time}.

It is shown that the QP solving times are similar for different architectures with varying $|\mathcal{X}_\text{adv}|$, but are more impacted by the size of the neural network when $k=20$. This pattern indicates the bottleneck of solving QPs efficiently is caused by the size of the neural network.  
Due to the nature of large-sized neural networks and the formulation of \eqref{eq:qp_cut_iter}, the number of variables is relatively large compared to the number of constraints. Therefore, we believe it could be solved more efficiently with a tailor-made implementation of a dual algorithm that could also benefit from warm-starts. We have not investigated this further, but we believe the solution times could be sped up significantly.

\subsubsection*{The evaluation-related computation}
The evaluation times of the loss function on the training dataset for CNNLight, CNN, and ResNet8 are shown in Table~\ref{tab:run_time} to reflect how the level of evaluation-related computational cost changes with the size of neural networks grows. We emphasize here that the execution times of evaluating the neural networks depend on the total number of iterations and the partition of stepsizes for the line search. 

For CNNLight and CNN, the loss evaluation time for a single execution is relatively small. Still, the accumulated time -- considering a total of 20 iterations and 10 line search stepsizes -- is still significant. To improve the efficiency, a more finely partitioned line search could be applied only in later iterations, since the high-quality candidate solutions are not found in the early stages. Moreover, the evaluation of the neural networks can be parallelized for better computational efficiency, especially for ResNet8, where a single execution of evaluating the loss can take $\sim$ 270 sec. All points of the line search could be evaluated fully in parallel, and this could greatly speed up the computations. We could also evaluate the loss function on smaller batches to reduce computational cost in this category. Thus, the evaluation times could be reduced greatly in a more sophisticated (parallel) implementation.

\begin{table}[hbtp!]
\centering
    \begin{tabular}{r|rrr|rr}
\hline
    & \multicolumn{3}{c|}{QP (Gurobi)} & \multicolumn{2}{c}{Eval. of loss}  \\
    &   $|\mathcal{X}_\text{adv}|$ & $k$ &    time  & Dataset & time  \\
    \hline 
CNNLight  &  10 & 1  &1.10 & \multirow{4}{*}{MNIST}  &\multirow{4}{*}{$\sim$    0.07}   \\
        &  10 & 20 &  69.23 \\
        &  50 & 1 & 10.66   \\
        & 50 & 20 & 949.78 \\
    \hline
    CNN & 10& 1&1.13 &\multirow{4}{*}{MNIST} & \multirow{4}{*}{$\sim$ 9.39}\\
     & 10 & 20 & 69.24 & \\
     & 50 & 1 & 30.19 & \\
     & 50 & 20 &3089.63 & \\
    \hline
    ResNet8 &10 & 1& 2.09&\multirow{4}{*}{CIFAR10}&\multirow{4}{*}{$\sim$ 275.92}\\
    & 10 & 20 & 96.53 &  \\
    & 50 & 1 & 15.98 &\\
    & 50 & 20 & 4130.36 & \\ 
    
    \hline
    \end{tabular}
    \caption{Running times of the main operations in Algorithm~\ref{alg:model_update}. $k$ is the iteration index, and in the later iterations, the QP problems \eqref{eq:qp_cut_iter} will be more difficult due to more constraints.}\label{tab:run_time}
\end{table}

\section{Conclusion}
We reformulate the robust training problem as a fine-tuning process that aims to solve an adversary-correction problem. This nonlinear problem is solved by linearization and a cutting-plane framework to improve the approximation. Based on this framework, Algorithm~\ref{alg:model_update} uses line search and a weighted objective function to balance the accuracy on both the test and adversarial datasets. 
We illustrate the performance of our method by numerical experiments with pre-trained models of various sizes and architectures. The numerical results show that we can significantly improve the robustness of CNNs and ResNet8 even with as few as 10 adversarial data points in the fine-tuning process and obtain even more significant improvements using more adversarial data. Our method shows a new direction for robust training of neural networks. Both the theory and the numerical results support that the adversarial correction problem is a sound approach for fine-tuning neural networks to improve robustness.

\section{Acknowledgment}
The authors gratefully acknowledge funding from Digital Futures and the 3.ai Digital Transformation Institute project \say{AI techniques for Power Systems Under Cyberattacks}.

\ifnum \Journaltemplate=1
\bibliography{mybib}
\else
\bibliographystyle{apalike}
\bibliography{mybib}

\begin{thebibliography}{}

\bibitem[Alex, 2009]{alex2009learning}
Alex, K. (2009).
\newblock Learning multiple layers of features from tiny images.
\newblock {\em https://www. cs. toronto. edu/kriz/learning-features-2009-TR. pdf}.

\bibitem[Aspman et~al., 2024]{Aspman_Korpas_Marecek_2024}
Aspman, J., Korpas, G., and Marecek, J. (2024).
\newblock Taming binarized neural networks and mixed-integer programs.
\newblock {\em Proceedings of the AAAI Conference on Artificial Intelligence}, 38(10):10935--10943.

\bibitem[Botoeva et~al., 2020]{botoeva2020efficient}
Botoeva, E., Kouvaros, P., Kronqvist, J., Lomuscio, A., and Misener, R. (2020).
\newblock Efficient verification of relu-based neural networks via dependency analysis.
\newblock In {\em Proceedings of the AAAI Conference on Artificial Intelligence}, volume~34, pages 3291--3299.

\bibitem[Deng, 2012]{deng2012mnist}
Deng, L. (2012).
\newblock The mnist database of handwritten digit images for machine learning research.
\newblock {\em IEEE Signal Process. Mag.}, 29(6):141--142.

\bibitem[Eichfelder, 2021]{eichfelder2021twenty}
Eichfelder, G. (2021).
\newblock Twenty years of continuous multiobjective optimization in the twenty-first century.
\newblock {\em EURO Journal on Computational Optimization}, 9:100014.

\bibitem[Elek, 2019]{kagglecifar10dataaug2019}
Elek, M.~I. (2019).
\newblock Data augmentation with keras using cifar-10.

\bibitem[Eykholt et~al., 2018]{eykholt2018robust}
Eykholt, K., Evtimov, I., Fernandes, E., Li, B., Rahmati, A., Xiao, C., Prakash, A., Kohno, T., and Song, D. (2018).
\newblock Robust physical-world attacks on deep learning visual classification.
\newblock In {\em Proceedings of the IEEE conference on computer vision and pattern recognition}, pages 1625--1634.

\bibitem[Fischetti and Jo, 2018]{fischetti2018deep}
Fischetti, M. and Jo, J. (2018).
\newblock Deep neural networks and mixed integer linear optimization.
\newblock {\em Constraints}, 23(3):296--309.

\bibitem[Goodfellow et~al., 2014]{goodfellow2014explaining}
Goodfellow, I.~J., Shlens, J., and Szegedy, C. (2014).
\newblock Explaining and harnessing adversarial examples.
\newblock {\em arXiv preprint arXiv:1412.6572}.

\bibitem[{Gurobi Optimization, LLC}, 2024]{gurobi}
{Gurobi Optimization, LLC} (2024).
\newblock {Gurobi Optimizer Reference Manual}.

\bibitem[Hassibi et~al., 1993]{hassibi1993optimal}
Hassibi, B., Stork, D.~G., and Wolff, G.~J. (1993).
\newblock Optimal brain surgeon and general network pruning.
\newblock In {\em IEEE international conference on neural networks}, pages 293--299. IEEE.

\bibitem[Huang et~al., 2018]{huang2018safety}
Huang, X., Kroening, D., Kwiatkowska, M., Ruan, W., Sun, Y., Thamo, E., Wu, M., and Yi, X. (2018).
\newblock Safety and trustworthiness of deep neural networks: A survey.
\newblock {\em arXiv preprint arXiv:1812.08342}, page 151.

\bibitem[Katz et~al., 2017]{katz2017reluplex}
Katz, G., Barrett, C., Dill, D.~L., Julian, K., and Kochenderfer, M.~J. (2017).
\newblock Reluplex: An efficient smt solver for verifying deep neural networks.
\newblock In {\em Computer Aided Verification: 29th International Conference, CAV 2017, Heidelberg, Germany, July 24-28, 2017, Proceedings, Part I 30}, pages 97--117. Springer.

\bibitem[Kelley, 1960]{kelley1960cutting}
Kelley, Jr, J.~E. (1960).
\newblock The cutting-plane method for solving convex programs.
\newblock {\em Journal of the society for Industrial and Applied Mathematics}, 8(4):703--712.

\bibitem[Kurakin et~al., 2016]{kurakin2016adversarial}
Kurakin, A., Goodfellow, I., and Bengio, S. (2016).
\newblock Adversarial machine learning at scale.
\newblock {\em arXiv preprint arXiv:1611.01236}.

\bibitem[LeCun et~al., 2015]{lecun2015deep}
LeCun, Y., Bengio, Y., and Hinton, G. (2015).
\newblock Deep learning.
\newblock {\em nature}, 521(7553):436--444.

\bibitem[LeCun et~al., 1989]{lecun1989optimal}
LeCun, Y., Denker, J., and Solla, S. (1989).
\newblock Optimal brain damage.
\newblock {\em Advances in neural information processing systems}, 2.

\bibitem[Liao et~al., 2018]{liao2018defense}
Liao, F., Liang, M., Dong, Y., Pang, T., Hu, X., and Zhu, J. (2018).
\newblock Defense against adversarial attacks using high-level representation guided denoiser.
\newblock In {\em Proceedings of the IEEE conference on computer vision and pattern recognition}, pages 1778--1787.

\bibitem[Liu et~al., 2021]{liu2021training}
Liu, A., Liu, X., Yu, H., Zhang, C., Liu, Q., and Tao, D. (2021).
\newblock Training robust deep neural networks via adversarial noise propagation.
\newblock {\em IEEE Transactions on Image Processing}, 30:5769--5781.

\bibitem[Machado et~al., 2021]{machado2021adversarial}
Machado, G.~R., Silva, E., and Goldschmidt, R.~R. (2021).
\newblock Adversarial machine learning in image classification: A survey toward the defender’s perspective.
\newblock {\em ACM Computing Surveys (CSUR)}, 55(1):1--38.

\bibitem[Madry et~al., 2017]{madry2017towards}
Madry, A., Makelov, A., Schmidt, L., Tsipras, D., and Vladu, A. (2017).
\newblock Towards deep learning models resistant to adversarial attacks.
\newblock {\em arXiv preprint arXiv:1706.06083}.

\bibitem[M{\"a}kel{\"a}, 2002]{makela2002survey}
M{\"a}kel{\"a}, M. (2002).
\newblock Survey of bundle methods for nonsmooth optimization.
\newblock {\em Optimization methods and software}, 17(1):1--29.

\bibitem[Papernot et~al., 2018]{papernot2018cleverhans}
Papernot, N., Faghri, F., Carlini, N., Goodfellow, I., Feinman, R., Kurakin, A., Xie, C., Sharma, Y., Brown, T., Roy, A., Matyasko, A., Behzadan, V., Hambardzumyan, K., Zhang, Z., Juang, Y.-L., Li, Z., Sheatsley, R., Garg, A., Uesato, J., Gierke, W., Dong, Y., Berthelot, D., Hendricks, P., Rauber, J., and Long, R. (2018).
\newblock Technical report on the cleverhans v2.1.0 adversarial examples library.
\newblock {\em arXiv preprint arXiv:1610.00768}.

\bibitem[Papernot et~al., 2017]{papernot2017practical}
Papernot, N., McDaniel, P., Goodfellow, I., Jha, S., Celik, Z.~B., and Swami, A. (2017).
\newblock Practical black-box attacks against machine learning.
\newblock In {\em Proceedings of the 2017 ACM on Asia conference on computer and communications security}, pages 506--519.

\bibitem[Paszke et~al., 2017]{paszke2017automatic}
Paszke, A., Gross, S., Chintala, S., Chanan, G., Yang, E., DeVito, Z., Lin, Z., Desmaison, A., Antiga, L., and Lerer, A. (2017).
\newblock Automatic differentiation in pytorch.
\newblock In {\em NIPS-W}.

\bibitem[Qin et~al., 2019]{qin2019adversarial}
Qin, C., Martens, J., Gowal, S., Krishnan, D., Dvijotham, K., Fawzi, A., De, S., Stanforth, R., and Kohli, P. (2019).
\newblock Adversarial robustness through local linearization.
\newblock {\em Advances in neural information processing systems}, 32.

\bibitem[Raghunathan et~al., 2018]{raghunathan2018certified}
Raghunathan, A., Steinhardt, J., and Liang, P. (2018).
\newblock Certified defenses against adversarial examples.
\newblock In {\em International Conference on Learning Representations}.

\bibitem[Ruan et~al., 2018]{ruan2018reachability}
Ruan, W., Huang, X., and Kwiatkowska, M. (2018).
\newblock Reachability analysis of deep neural networks with provable guarantees.
\newblock {\em arXiv preprint arXiv:1805.02242}.

\bibitem[Sehwag et~al., 2020]{sehwag2020hydra}
Sehwag, V., Wang, S., Mittal, P., and Jana, S. (2020).
\newblock Hydra: Pruning adversarially robust neural networks.
\newblock {\em Advances in Neural Information Processing Systems}, 33:19655--19666.

\bibitem[Shi et~al., 2021]{shi2021fast}
Shi, Z., Wang, Y., Zhang, H., Yi, J., and Hsieh, C.-J. (2021).
\newblock Fast certified robust training with short warmup.
\newblock {\em Advances in Neural Information Processing Systems}, 34:18335--18349.

\bibitem[Sosnin et~al., 2024]{sosnin2024certified}
Sosnin, P., M{\"u}ller, M.~N., Baader, M., Tsay, C., and Wicker, M. (2024).
\newblock Certified robustness to data poisoning in gradient-based training.
\newblock arXiv:2406.05670.

\bibitem[Szegedy et~al., 2014]{szegedy2014intriguing}
Szegedy, C., Zaremba, W., Sutskever, I., Bruna, J., Erhan, D., Goodfellow, I., and Fergus, R. (2014).
\newblock Intriguing properties of neural networks.
\newblock {\em arXiv preprint arXiv:1312.6199}.

\bibitem[Thorbjarnarson and Yorke-Smith, 2023]{thorbjarnarson2023optimal}
Thorbjarnarson, T. and Yorke-Smith, N. (2023).
\newblock Optimal training of integer-valued neural networks with mixed integer programming.
\newblock {\em Plos one}, 18(2):e0261029.

\bibitem[Toro~Icarte et~al., 2019]{toro2019training}
Toro~Icarte, R., Illanes, L., Castro, M.~P., Cire, A.~A., McIlraith, S.~A., and Beck, J.~C. (2019).
\newblock Training binarized neural networks using mip and cp.
\newblock In {\em Principles and Practice of Constraint Programming: 25th International Conference, CP 2019, Stamford, CT, USA, September 30--October 4, 2019, Proceedings 25}, pages 401--417. Springer.

\bibitem[Tram{\`e}r et~al., 2017]{tramer2017ensemble}
Tram{\`e}r, F., Kurakin, A., Papernot, N., Goodfellow, I., Boneh, D., and McDaniel, P. (2017).
\newblock Ensemble adversarial training: Attacks and defenses.
\newblock {\em arXiv preprint arXiv:1705.07204}.

\bibitem[Tsay et~al., 2021]{tsay2021partition}
Tsay, C., Kronqvist, J., Thebelt, A., and Misener, R. (2021).
\newblock Partition-based formulations for mixed-integer optimization of trained relu neural networks.
\newblock {\em Advances in neural information processing systems}, 34:3068--3080.

\bibitem[Tsiligkaridis and Roberts, 2022]{tsiligkaridis2022understanding}
Tsiligkaridis, T. and Roberts, J. (2022).
\newblock Understanding and increasing efficiency of frank-wolfe adversarial training.
\newblock In {\em Proceedings of the IEEE/CVF Conference on Computer Vision and Pattern Recognition}, pages 50--59.

\bibitem[Wu and Zhang, 2021]{wu2021tightening}
Wu, Y. and Zhang, M. (2021).
\newblock Tightening robustness verification of convolutional neural networks with fine-grained linear approximation.
\newblock In {\em Proceedings of the AAAI Conference on Artificial Intelligence}, volume~35, pages 11674--11681.

\bibitem[Yu et~al., 2022]{yu2022combinatorial}
Yu, X., Serra, T., Ramalingam, S., and Zhe, S. (2022).
\newblock The combinatorial brain surgeon: pruning weights that cancel one another in neural networks.
\newblock In {\em International Conference on Machine Learning}, pages 25668--25683. PMLR.

\bibitem[Zhao et~al., 2023]{zhao2023model}
Zhao, S., Tsay, C., and Kronqvist, J. (2023).
\newblock Model-based feature selection for neural networks: A mixed-integer programming approach.
\newblock In {\em International Conference on Learning and Intelligent Optimization}, pages 223--238. Springer.

\bibitem[Z{\"u}gner and G{\"u}nnemann, 2019]{zugner2019certifiable}
Z{\"u}gner, D. and G{\"u}nnemann, S. (2019).
\newblock Certifiable robustness and robust training for graph convolutional networks.
\newblock In {\em Proceedings of the 25th ACM SIGKDD International Conference on Knowledge Discovery \& Data Mining}, pages 246--256.

\end{thebibliography}
\fi
\end{document}